\newcommand{\uni}{\cup} 
\newtheorem{definition}{Definition}
\newtheorem{lemma}{Lemma}
\newtheorem{theorem}{Theorem}
\newtheorem{proposition}{Proposition}
\newtheorem{example}{Example}
\begin{document}
%
\title{On Stochastic Belief Revision and Update and their Combination}
\author{Gavin Rens\\
Centre for Artificial Intelligence Research,\\ University of KwaZulu-Natal, School of Mathematics, Statistics and Computer Science and\\ CSIR Meraka, South Africa\\
Email: gavinrens@gmail.com
}
\maketitle
\begin{abstract}
\begin{quote}
I propose a framework for an agent to change its probabilistic beliefs when a new piece of propositional information $\alpha$ is observed. Traditionally, belief change occurs by either a revision process or by an update process, depending on whether the agent is \textit{informed} with $\alpha$ in a static world or, respectively, whether $\alpha$ is a `signal' from the environment due to an event occurring.
Boutilier suggested a unified model of qualitative belief change, which ``combines aspects of revision and update, providing a more realistic characterization of belief change.'' In this paper, I propose a unified model of \textit{quantitative} belief change, where an agent's beliefs are represented as a probability distribution over possible worlds. As does Boutilier, I take a dynamical systems perspective.
The proposed approach is evaluated against several rationality postulated, and some properties of the approach are worked out.
\end{quote}
\end{abstract}

\noindent
Information acquired can be due to evolution of the world or revelation about the world.
That is, one may notice via some `signal' generated by the changing environment that the environment has changed, or, one may be informed by an independent agent in a static environment that some `fact' holds.

In the present work, I deal with belief change of agents who handle uncertainty by maintaining a probability distribution over possible situations.
The agents in this framework also have models for nondeterministic events, and noisy observations. Noisy observation models can model imperfect sensory equipment for receiving environmental signals, but they can also model untrustworthy informants in a static world.

In this paper, I provide the work of \citet{b98a} as background, because it has several connections with and was the seed for the present work. However, I do not intend simply to give a probabilistic version of his Generalized Update Semantics.
Whereas \citet{b98a} presents a model for unifying qualitative belief revision and update, I build on his work to present a unified model of belief revision and update in a stochastic (probabilistic) setting. I also take a dynamical systems perspective, like him. Due to my quantitative approach, an agent can maintain a probability distribution over the worlds it believes possible, using an \textit{expectation} semantics of change. This is in contrast to Boutilier's ``generalized update'' approach, which takes a most-plausible event and most-plausible world approach. Finally, my proposal requires a trade-off factor to mix the changes in probability distribution over possible worlds brought about due to the probabilistic belief revision process and, respectively, the probabilistic belief update process. Boutilier's model has revision and update more tightly coupled. For this reason, his approach is better called ``unified'' while mine is called ``hybrid''.

The belief change community does not study probabilistic belief \textit{update}; it is studied almost exclusively in frameworks employing Bayesian conditioning -- for modeling events and actions in dynamical domains (e.g., DBNs, MDPs, POMDPs) \citep[e.g.]{kf09,pm10}.
The part of my approach responsible for updating stays within the Bayesian framework, but combines the essential elements of belief update with \textit{unobservable events} and belief update as partially observable Markov decision process (POMDP) \textit{state estimation}.

On the other hand, there is plenty of literature on probabilistic belief \textit{revision} \citep[e.g.]{v99a,gh98,k08,yl08}. The subject is both deep and broad. There is no one accepted approach and to argue which is the best is not the focus of this paper. I shall choose one reasonable method for probabilistic belief revision suitable to the task at hand. 


In the first section, Boutilier's `generalized update' is reviewed. Then, in the next section, I introduce stochastic update and stochastic revision, culminating in the `hybrid stochastic belief change' (HSBC) approach. The final section presents an example inspired by Boutilier's article \citeyearpar{b98a} and analyses the results.

Some proofs of propositions are omitted to save space; they are available on request.

\section{Boutilier's Generalized Update}\label{sec:bgu}

I use Boutilier's notation and descriptions, except that I am more comfortable with $\alpha$ and $\beta$ to represent sentences, instead of $A$ and $B$.
It is assumed that an agent has a deductively closed belief set $K$, a set of sentences drawn from some logical language reflecting the agent's beliefs about the current state of the world.
For ease of presentation, I assume a logically finite, classical propositional language, denoted $L$ ($L_\mathit{CPL}$ in \citet{b98a}), and consequence operation $\mathit{Cn}$.
The belief set $K$ will often be generated by some finite knowledge base $\mathit{KB}$ (i.e., $K=\mathit{Cn}(\mathit{KB})$).
The identically true and false propositions are denoted $\top$ and $\bot$, respectively.
Given a set of possible worlds $W$ (or valuations over $L$) and $\alpha\in L$, the set of $\alpha$-worlds, that is, the elements of $W$ satisfying $\alpha$, is denoted by $||\alpha||$.
The worlds satisfying all sentences in a set $K$ is denoted $||K||$.

\subsection{Update}

Given a belief set $K$, an agent will often observe a change in the world $\alpha$, requiring the agent to change $K$. This is the \textit{update} of $K$ by $\alpha$, denoted $K^\diamond_\alpha$.

``$||\mathit{KB}||$ represents the set of possibilities we are prepared to accept as the actual state of affairs. Since observation $\alpha$ is the result of some change in the actual world, we ought to consider, for each possibility $w\in ||\mathit{KB}||$, the most plausible way (or ways) in which $w$ might have changed in order to make $\alpha$ true. That is, we want to consider the most plausible evolution of world $w$ into a world satisfying the observation $\alpha$. To capture this intuition, \citet{km91} propose a family of preorders $\{\leq_w\mid w\in W\}$, where each $\leq_w$ is a reflexive, transitive relation over $W$. We interpret each such relation as follows: if $u\leq_w v$ then $u$ is at least as plausible a change relative to $w$ as is $v$; that is, situation $w$ would more readily evolve into $u$ than it would into $v$.

Finally, a faithfulness condition is imposed: for every world $w$, the preorder $\leq_w$ has $w$ as a minimum element; that is, $w<_w v$ for all $v\neq w$. Naturally, the most plausible candidate changes in $w$ that result in $\alpha$ are those worlds $v$ satisfying $\alpha$ that are minimal in the relation $\leq_w$. The set of such minimal $\alpha$-worlds for each relation $\leq_w$, and each $w\in ||\mathit{KB}||$, intuitively capture the situations we ought to accept as possible when updating $\mathit{KB}$ with $\alpha$,'' \citep[p.~9]{b98a}. In other words,
\[
||\mathit{KB} \diamond\alpha|| = \bigcup_{w\in||\mathit{KB}||}\{\mathit{Min}(\alpha,\leq_w)\},
\]
where $\mathit{Min}(\alpha,\leq_w)$ specifies the minimal $\alpha$-worlds with respect to the preorder $\leq_w$.
Then
$
K^\diamond_\alpha = \mathit{Cn}(\mathit{KB} \diamond\alpha)
$,
where $K$ is the belief set associated with $\mathit{KB}$.

\subsection{Revision}

Given a belief set $K$, an agent will often obtain information $\alpha$ in a static world, which must be incorporated into $K$. This is the \textit{revision} of $K$ by $\alpha$, denoted $K^*_\alpha$.

The AGM theory of belief revision \citep{agm85} provides a set of guidelines, in the form of the postulates, governing the process. ``Unfortunately, while the postulates constrain possible revisions, they do not dictate the precise beliefs that should be retracted when $\alpha$ is observed. An alternative model of revision, based on the notion of epistemic entrenchment \citep{g88}, has a more constructive nature,'' \citep[p.~6]{b98a}.

``Semantically, an entrenchment relation (hence a revision function) can be modeled using an ordering on possible worlds reflecting their relative plausibility \citep{g88b,b94}.
However, rather than use a qualitative ranking relation, we adopt the presentation of \citep{s88,gp92} and rank all possible worlds using a $\kappa$-ranking. Such a ranking $\kappa: W \to N$ assigns to each world a natural number reflecting its plausibility or degree of believability.
If $\kappa(w)< \kappa(v)$ then $w$ is more plausible than $v$ or more consistent with the agent's beliefs.
We insist that $\kappa^{-1}(0) \neq \emptyset$, so that maximally plausible worlds are assigned rank 0.
These maximally plausible worlds are exactly those consistent with the agent's beliefs; that is, the epistemically possible worlds according to $K$ are those deemed most plausible in $\kappa$ (see \citet{s88} for further details).
We sometimes assume $\kappa$ is a partial function, and loosely write $\kappa(w)=\infty$ to mean $\kappa(w)$ is not defined (i.e., $w$ is not in the domain of $\kappa$, or $w$ is impossible),'' \citep[p.~6]{b98a}.

A $\kappa$-ranking captures the entrenchment of the agent's beliefs in its belief set $K$. This entrenchment determines how $K$ will be revised when the agent receives new information / makes an observation $\alpha$. $\kappa$ induces a belief set as follows.
\[
K=\{\alpha\in L\mid \kappa^{-1}(0)\subseteq ||\alpha||\}.
\]
Due to the ranking or entrenchment of knowledge provided by $\kappa$, $\kappa$ is considered an \textit{epistemic state}.

``In other words, the set of most plausible worlds (those such that $\kappa(w)=0$) determine the agent's beliefs.
The ranking $\kappa$ also induces a revision function: to revise by $\alpha$ an agent adopts the most plausible $\alpha$-worlds as epistemically possible,'' \citep[p.~6]{b98a}.

Let $W_i = \{w\in W\mid \kappa(w)=i\}$.
And let $\mathit{Min}(\alpha,\kappa)$
be the set $W_i$ with the least $i$ such that for all $w^i\in W_i$, $w_i\models\alpha$.
Then
\[
K^*_\alpha := \{\beta \in L\mid \mathit{Min}(\alpha,\kappa)\subseteq ||\beta||\}.
\]
In words, the belief set revised by $\alpha$ contains all those sentences entailed by the set of worlds with the same rank, where that rank is the least such that they are all $\alpha$-worlds.

\subsection{Generalized Update}

As explained in the introduction, my intention with this paper is not to give a probabilistic version of the Generalized Update approach \citep{b98a}. For completeness, however, I sketch the approach here covering the approach in detail would take up unnecessary space without lending much insight into my Hybrid Stochastic Belief Change approach.

Boutilier motivates the need for a generalized update method which includes revision, by claiming that KM update \citep{km91} is insufficient. He provides the following example adopted from \citet{m90}. Suppose you want to test whether the contents of a beaker are chemically acid or base. If it is acid, a piece of litmus paper will turn red, if base, the paper will turn blue. Suppose that the test has not yet been performed, but you believe that the contents in the beaker are acidic. When the litmus paper is dipped into and pulled out of the beaker, the paper turns blue, indicating a basic compound. ``Unfortunately, the KM theory does not allow this to take place. [...] One is forced to accept that, if the contents were acidic (in which case it should turn red), some extraordinary change occurred (the test failed, the contents of the beaker were switched, etc.). [...] Of course, the right thing to do is simply admit
that the beaker did not, in fact, contain an acid---the agent should \textit{revise} its beliefs about the contents of the beaker,'' \citep[p.~13]{b98a}.

Boutilier adopts an event-based approach where a set of events $E$ is assumed. These events are allowed to be nondeterministic, and each possible outcome of an event is ranked according to its plausibility via a $\kappa$-ranking.
``As in the original event-based semantics, we will assume each world has an event ordering associated with it that describes the plausibility of various event occurrences at that world,'' \citep[p.~14]{b98a}.

A \emph{generalized update model} is then defined as $\langle W,\kappa,E,\mu\rangle$, where $W$ is a set of worlds (the agent's epistemic state), $\kappa$ is a ranking over $W$, $E$ is a set of events (mappings over $W$), and $\mu$ is an event ordering (a set of mappings over $E$).

As with KM update, updates usually occur in response to some observation, with the assumption that something occurred to cause this observation. After observing $\alpha$, an agent should adjust its beliefs by considering that only the most plausible transitions leading to $\alpha$ actually occurred. The set of possible $\alpha$-transitions are those transitions leading to $\alpha$-worlds.
The most plausible $\alpha$-transitions are those possible $\alpha$-transitions with the minimal $\kappa$-ranking. Given that $\alpha$ has actually been observed, an agent should assume that one of these
transitions describes the actual course of events. The worlds judged to be epistemically possible are those that result from the most plausible of these transitions.

\citet{b98a} has a proposition that states that generalized belief update as described above is equivalent to ``first determining the predicted updated ranking $\kappa^\diamond$ followed by standard (AGM) revision by $\alpha$ with respect to $\kappa^\diamond$,'' \citep[p.~16]{b98a}.
$\kappa^\diamond$ is determined by taking the worlds in the current possible worlds $||K||$ (induced from $\kappa$) and shifting them to all possible worlds given all possible transitions given all possible events (the actual event is unknown), taking into account the relevant plausibility rankings.

\section{Stochastic Belief Change}\label{sec:hsc}

I now consider agents who deal with uncertainty by maintaining a probability distribution over possible situations (worlds) they could be in.
Let a \textit{belief state} $b$ be defined as the set $\{(w,p)\mid w\in W, p\in[0,1]\}$, where $\sum_{(w,p)\in b}p=1$. The probability of being in $w$ is denoted $b(w)$. That is, $b$ is a probability distribution over all the worlds in $W$. In the hybrid stochastic belief change (HSBC) framework, an agent maintains a belief state, which changes as new information is received or observed.

An agent is assumed to have a model of how the world works.
\begin{definition}
\label{def:sbc-model}
The \emph{stochastic belief change model} $M$ has the form $\langle W,\varepsilon,T,E,O,\mathit{os}\rangle$, where
\begin{itemize}
\itemsep=0pt
\item
$W$ is a set of possible worlds, 
\item
$\varepsilon$ is a set of events, 
\item
$T:(W\times \varepsilon\times W)\to[0,1]$ is a \emph{transition function} such that for every $e\in \varepsilon$ and $w\in W$, $\sum_{w'\in W}T(w,e,w')=1$
($T(w,e,w')$ models the probability of a transition to world $w'$, given the occurrence of event $e$ in world $w$),
\item
$E$ is the event likelihood function ($E(e,w)=P(e\mid w)$, the probability of the occurrence of event $e$ in $w$),
\item
$O:(L\times W)\to[0,1]$ is an \emph{observation function} such that for every world $w$, $\sum_{\alpha\in \Omega}O(\alpha,w)=1$ ($O(\alpha, w)$ models the probability of observing $\alpha$ in $w$), where $\Omega\subset L$ is the set of possible observations, up to equivalence, and where if $\alpha\equiv\beta$, then $O(\alpha, w)=O(\beta, w)$, for all worlds $w$.\footnote{$\equiv$ denotes logical equivalence.}
\item
$\mathit{os}:(\Omega\times W)\to[0,1]$
($\mathit{os}(\alpha,w)$ is the agent's ontic strength for $\alpha$ perceived in $w$.)
\end{itemize}
\end{definition}

\begin{definition}
$b(\alpha):=\sum_{w\in W,w\models\alpha}b(w)$.
\end{definition}

Let $b^\circ_\alpha:=b\circ\alpha$ so that we can write $b^\circ_\alpha(w)$, where $\circ$ is any update or revision operator.

Often, in the exposition of this paper, a world will be referred to by its truth vector. For instance, if the vocabulary is $\{q,r\}$ and $w_3\models \lnot q\land r$, then $w_3$ may be referred to as $01$.

For parsimony, let $b=\langle p_1,\ldots, p_n \rangle$ be the probabilities that belief state $b$ assigns to $w_1,\ldots, w_n$ where $\langle w_1,w_2,w_3,w_4\rangle$ $=$ $\langle 11,10,01,00\rangle$, and $\langle w_1,w_2,\ldots,w_8\rangle$ $=$ $\langle 111,110,\ldots,000\rangle$.

\subsection{Update}

Transitions associated with the observation of $\alpha$ from a world $w$ in the current belief state $b_\mathit{cur}$ to a world $w'$ could be caused by different events.
According to \citet{b98a}, update can be defined as
\begin{align*}
&b^\mathit{event}_\mathit{new} := \Big\{(w',p')\mid w'\in W, p'=\\
&\qquad\sum_{w\in W}\sum_{e\in\varepsilon}T(w,e,w')E(e, w)b_\mathit{cur}(w)\Big\}.
\end{align*}
Because the actual event is unobservable/hidden, $p'$ is the \textit{expected} probability of reaching $w'$, given the event probabilities.

In partially observable Markov decision process (POMDP) theory \citep{a65,m82,l91}, events are actions chosen by the agent (and thus observable) and observations are hidden. Then, given current belief state $b_\mathit{cur}$, selected action $a$ and observation $o$, the \textit{state estimation function} is defined by 
\begin{align*}
&b^\mathit{pomdp}_\mathit{new} := \Bigg\{(w',p')\mid w'\in W, p'=\\
&\qquad\frac{O(o,a,w')\sum_{w\in W}T(w,a,w')b_\mathit{cur}(w)}{P(o\mid a,b_\mathit{cur})}\Bigg\},
\end{align*}
where 
$\Omega$ is a set of observation objects and $O:(\Omega\times A\times W)\to[0,1]$ is an \textit{observation function}, such that for every $a$ and $w'$, $\sum_{o\in \Omega}O(o,a,w')=1$.
$O(o,a,w')$ models the probability of perceiving $o$ in arrival world $w'$, given the execution of some action $a\in A$.
Note that $P(o\mid a,b_\mathit{cur})$ is a normalizing constant.

But what is the probabilistic update, given new information/evidence $\alpha$?
I suggest that $\alpha$ is the (overt) `signal' generated by the (covert) event.
An important question is, When is $\alpha$ received -- in the current/departure world ($w_c$) or in the new/arrival world ($w_n$)? Although it is not clear to me, in POMDP theory, observations are always assumed to be received in the arrival world -- I shall assume the same.

In the present framework, actions are not selected by the agent, but by nature. In other words, actions are considered to be events occurring in the environment, uncontrollable by the agent. Further, at the present stage of research, I shall assume that the agent has a less detailed observation model, that is, an agent only knows $O(\alpha, w_n)$, the probability of perceiving $\alpha$ in arrival world $w_n$ (defined in Def.~\ref{def:sbc-model}).
Hence, I propose to weight $b^\mathit{event}_\mathit{new}(w')$ by $O(\alpha,w_n)$ when receiving new information $\alpha$ and one knows that one's belief state should be \textit{updated} (due to an evolving world).
Then we can define
\begin{definition}
\begin{align*}
b\diamond\alpha :=& \big\{(w',p')\mid w'\in W, p'=\\
&\frac{1}{\gamma}O(\alpha,w')\sum_{w\in W}\sum_{e\in\varepsilon}T(w,e,w')E(e, w)b(w)\big\},
\end{align*}
where $\gamma$ is a normalizing factor.
\end{definition}

As far as I know, no-one has proposed rationality postulates for probabilistic update. The reason is likely due to probabilistic update being defined in terms of standard probability theory. The axioms of probability theory have been argued to be rational for several decades (although it is not without its detractors).

The following basic postulates for my probabilistic belief update are proposed. (Unless stated otherwise, it is assumed that $\alpha$ is logically satisfiable, i.e., $\vdash\lnot\alpha$ is false.)
\begin{itemize}
\itemsep=0pt
\item[]($P^\diamond 1$) $b^\diamond_\alpha$ is a belief state iff not $\vdash\lnot\alpha$
\item[]($P^\diamond 2$) $b^\diamond_\alpha(\alpha)=1$
\item[]($P^\diamond 3$) If $\alpha\equiv\beta$, then $b^\diamond_\alpha=b^\diamond_\beta$
\end{itemize}

\begin{proposition}
If $b^\diamond_\alpha(\alpha)>0$, it is not necessary that $b^\diamond_\alpha(\alpha)=1$.
\label{prp:not-necess-accepted}
\end{proposition}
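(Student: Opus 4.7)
The plan is to establish this by exhibiting a small counterexample, since the claim is a negative one (non-necessity). The crucial observation is that the update formula weights each arrival world $w'$ by $O(\alpha, w')$, but the definition of $O$ in Definition \ref{def:sbc-model} does \emph{not} force $O(\alpha, w')$ to vanish when $w' \not\models \alpha$: observations are noisy, so $\alpha$ can be perceived in a world where $\alpha$ fails to hold. Consequently, after normalization, some posterior mass can remain on $\neg\alpha$-worlds, giving $b^\diamond_\alpha(\alpha) < 1$ while still being positive.

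Concretely, I would take the smallest nontrivial setting. Let the vocabulary be $\{q\}$, so $W=\{w_1,w_2\}$ with $w_1 \models q$ and $w_2 \models \neg q$, and let $\alpha := q$. Choose $b = \langle 1/2, 1/2\rangle$. To isolate the effect of the observation weighting, take a single "no-op" event $e$ with $T(w_i,e,w_i)=1$ for $i=1,2$ and $E(e,w)=1$, so $\sum_{w,e} T(w,e,w')E(e,w)b(w) = b(w')$. Finally, let the observation model be noisy, e.g.\ $O(q,w_1)=0.9$ and $O(q,w_2)=0.3$, with the remaining observation mass assigned to any other symbol in $\Omega$ so that the column sums remain 1.

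Then I would just plug these numbers into the definition of $b \diamond \alpha$. The unnormalized masses are $0.9\cdot 0.5 = 0.45$ for $w_1$ and $0.3\cdot 0.5 = 0.15$ for $w_2$, so $\gamma = 0.6$ and the posterior is $b^\diamond_\alpha = \langle 0.75, 0.25\rangle$. Hence $b^\diamond_\alpha(\alpha) = b^\diamond_\alpha(w_1) = 0.75$, which satisfies $0 < b^\diamond_\alpha(\alpha) < 1$, as required.

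The main thing to get right is not a calculation but the conceptual point: I would make sure to flag explicitly that the example works precisely because the observation model is noisy (i.e.\ $O(\alpha,w') > 0$ for some $w' \not\models \alpha$), which is exactly the feature that distinguishes the present update from classical Bayesian conditioning on $\alpha$ and from KM-style update, and which prevents postulate $(P^\diamond 2)$-style acceptance from following automatically. No real obstacle is expected; the only minor care needed is to ensure the observation function I define is admissible under Definition~\ref{def:sbc-model}, which is why I pad the row with mass on a second observation symbol so that $\sum_{\beta\in\Omega} O(\beta,w)=1$ in each world.
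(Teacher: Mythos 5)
Your proposal is correct and follows essentially the same route as the paper: both construct an explicit numerical counterexample in which the observation function assigns positive probability to $\alpha$ in a non-$\alpha$-world, so that after normalization some mass remains on $\lnot\alpha$-worlds and $0<b^\diamond_\alpha(\alpha)<1$ (the paper uses a two-proposition vocabulary with nontrivial transitions and gets $b^\diamond_q(q)=0.82$; you use a smaller one-proposition, identity-transition example giving $0.75$). Your calculation checks out and your care about $\sum_{\beta\in\Omega}O(\beta,w)=1$ is a harmless extra precaution.
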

\begin{proof}
Let the vocabulary be $\{q,r\}$. Let $b=\langle0.4,0,0.1,0.5\rangle$. Let there be only one event $e$. Let the transition function be specified as $T(11,e,11)=0.5$, $T(11,e,10)=0.5$, $T(10,e,01)=1$, $T(01,e,00)=1$, $T(00,e,11)=1$.
Let $E(e, w)=1$ for all $w\in W$.
Let the evidence be $q$.
Let $O(q, 11)=0.2$, $O(q, 10)=0$, $O(q, 01)=0$, $O(q, 00)=0.3$.
Then applying operation $\diamond$ to $b$ produces $b^\diamond_q =\langle0.82,0,0,0.18\rangle$.
Hence, $b^\diamond_q(q)=0.82\neq1$.
\end{proof}

\medskip
Although the following proposition is mostly negative, the reader will soon see that constraining the stochastic belief change model
to be `rational', the negative postulates become positive.
\begin{proposition}
Postulate ($P^\diamond 3$) holds, while ($P^\diamond 1$) and ($P^\diamond 2$) do not hold.
\label{prp:update-postulates}
\end{proposition}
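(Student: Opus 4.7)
The plan is to treat the three postulates separately: $(P^\diamond 3)$ should fall out of the definitions, while $(P^\diamond 1)$ and $(P^\diamond 2)$ will each require a concrete counterexample.

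For $(P^\diamond 3)$, I would argue directly from the update formula. The expression defining $b\diamond\alpha$ at a world $w'$ depends on $\alpha$ only through the factor $O(\alpha,w')$; the transition function $T$, the event likelihood $E$, and the prior $b$ never see $\alpha$. Definition \ref{def:sbc-model} explicitly stipulates that $\alpha\equiv\beta$ implies $O(\alpha,w)=O(\beta,w)$ for every world $w$. Therefore the unnormalized numerator agrees pointwise for $\alpha$ and $\beta$, the normalizing factor $\gamma$ (being the sum of those numerators) also agrees, and so $b^\diamond_\alpha(w')=b^\diamond_\beta(w')$ for every $w'$. This is the clean positive part of the statement.

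For $(P^\diamond 2)$, I do not need to build anything new: Proposition~\ref{prp:not-necess-accepted} already exhibits a satisfiable $\alpha=q$ with $b^\diamond_q(q)=0.82\neq 1$, which immediately refutes the postulate. I would simply cite that example. For $(P^\diamond 1)$, the refutation must come from the $\Leftarrow$ direction (the $\Rightarrow$ direction is trivially compatible with the update being well defined). The idea is to exploit the normalizer $\gamma$: if $\gamma=0$, the formula produces no belief state. Concretely, I would pick a satisfiable $\alpha$ and design a model in which, for every $w'$ reachable by some event from a world carrying positive prior mass, $O(\alpha,w')=0$. Then every summand inside $\gamma$ vanishes and $b^\diamond_\alpha$ is undefined, even though $\not\vdash\neg\alpha$.

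The main (small) obstacle is making sure such a counterexample respects the row-sum constraint $\sum_{\alpha'\in\Omega}O(\alpha',w)=1$ at each world; I handle this by adding an auxiliary observation $\beta\in\Omega$ that absorbs the remaining observation probability at each $w$. For a minimal instance I would reuse the two-variable vocabulary of Proposition~\ref{prp:not-necess-accepted} and essentially the same $T$, $E$, and $b$, but pick $\alpha$ so that $O(\alpha,w)=0$ on the support of $b$ after one transition step, with a complementary observation carrying the unit mass. Checking that this model satisfies Definition~\ref{def:sbc-model} and that $\gamma=0$ completes the refutation of $(P^\diamond 1)$.
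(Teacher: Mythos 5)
Your proposal is correct: $(P^\diamond 3)$ follows exactly as you say from the stipulation in Definition~\ref{def:sbc-model} that equivalent sentences get identical $O$-values, the example of Proposition~\ref{prp:not-necess-accepted} already refutes $(P^\diamond 2)$, and a satisfiable $\alpha$ with $O(\alpha,\cdot)=0$ on every world receiving positive predicted mass makes $\gamma=0$ and so refutes the $\Leftarrow$ direction of $(P^\diamond 1)$, with your auxiliary observation handling the constraint $\sum_{\alpha'\in\Omega}O(\alpha',w)=1$. The paper omits its own proof of this proposition, but your argument is the natural one implicit in the surrounding text (it is exactly the failure modes that Propositions~\ref{prp:update-defined} and \ref{prp:update-postulate2} later rule out via observation-rationality, event-rationality, $e$-signals and trustworthiness), so there is nothing to add.
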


\begin{definition}
\label{def:event-rational}
We say event $e$ is \emph{event-rational} when for all $w\in W$: there exists a $w'$ such that $T(w,e,w')>0$ iff $E(e, w)>0$.
\end{definition}

\begin{definition}
\label{def:e-signal}
We say $\alpha$ is an $e$\emph{-signal} when for all $w'\in W$: there exists a $w$ such that $T(w,e,w')>0$ iff $O(\alpha,w')>0$.
\end{definition}

\begin{definition}
\label{def:observation-rational}
We say a model $M$ is \emph{observation-rational} iff for all $\alpha$, whenever $\vdash\lnot\alpha$, $O(\alpha, w)=0$ for all $w\in W$.
\end{definition}

The proposition below says that if one is rational w.r.t. observations and w.r.t. some event, and $\alpha$ is a signal produced by that event, then updating on $\alpha$ is defined.
\begin{proposition}
If $M$ is observation-rational, there exists an event $e\in\varepsilon$ which is event-rational and $\alpha$ is an $e$-signal, then $b^\diamond_\alpha$ is a belief state iff not $\vdash\lnot\alpha$ (i.e., then ($P^\diamond 1$) holds).
\label{prp:update-defined}
\end{proposition}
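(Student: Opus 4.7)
The plan is to prove the two directions of the biconditional separately. Recall that $b^\diamond_\alpha$ is a belief state precisely when the normalizing factor
\[
\gamma \;=\; \sum_{w'\in W} O(\alpha,w')\sum_{w\in W}\sum_{e'\in\varepsilon} T(w,e',w')\,E(e',w)\,b(w)
\]
is strictly positive, because the unnormalized masses are all nonnegative; if $\gamma > 0$ the $p'$ sum to $1$, otherwise $b^\diamond_\alpha$ fails to be a distribution.

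For the ($\Rightarrow$) direction I would argue by contraposition. Suppose $\vdash\lnot\alpha$. Then observation-rationality (Def.~\ref{def:observation-rational}) forces $O(\alpha,w')=0$ for every $w'\in W$, so every outer summand of $\gamma$ vanishes, giving $\gamma=0$, and hence $b^\diamond_\alpha$ is not a belief state. This direction only uses observation-rationality.

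For the ($\Leftarrow$) direction the plan is to exhibit a single strictly positive term in the double sum defining $\gamma$. First, since $b$ is a belief state, pick $w_0\in W$ with $b(w_0)>0$. Next, since $T(w_0,e,\cdot)$ is a probability distribution over $W$, pick some $w^*$ with $T(w_0,e,w^*)>0$. Now apply the two ``rationality'' hypotheses in turn: event-rationality of $e$ (Def.~\ref{def:event-rational}) gives, from the existence of $w^*$ with $T(w_0,e,w^*)>0$, that $E(e,w_0)>0$; and the $e$-signal property of $\alpha$ (Def.~\ref{def:e-signal}) gives, from the existence of $w_0$ with $T(w_0,e,w^*)>0$, that $O(\alpha,w^*)>0$. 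Thus the single contribution
\[
O(\alpha,w^*)\,T(w_0,e,w^*)\,E(e,w_0)\,b(w_0) \;>\; 0,
\]
which is one of the nonnegative summands of $\gamma$, already forces $\gamma>0$. Hence $b^\diamond_\alpha$ is a well-defined belief state.

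The main obstacle is purely bookkeeping: one must be careful that the event-rational condition and the $e$-signal condition are each biconditionals, and use the ``$\Leftarrow$'' direction of each at the correct place in the chain (from a known positive transition to a positive event-likelihood, and from a known positive transition to a positive observation-likelihood). Notably, the hypothesis ``not $\vdash\lnot\alpha$'' is not directly invoked in the $(\Leftarrow)$ argument; in fact, under the combined hypotheses, $\vdash\lnot\alpha$ would via observation-rationality, the $e$-signal property, and the fact that $T(w,e,\cdot)$ sums to $1$ yield a contradiction, so the premise is automatically consistent.
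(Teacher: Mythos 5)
Your proof is correct: in each direction you reduce the claim to the sign of the normalizing constant $\gamma$, using observation-rationality to force $\gamma=0$ when $\vdash\lnot\alpha$, and using $b(w_0)>0$, the fact that $T(w_0,e,\cdot)$ sums to $1$, event-rationality, and the $e$-signal property (each applied in the correct direction of its biconditional) to exhibit a strictly positive summand when $\vdash\lnot\alpha$ fails. The paper omits its own proof of this proposition, but the two-direction structure it implicitly relies on (the proof of Proposition~\ref{prp:update-postulate2} cites the ($\Leftarrow$) part, which must establish exactly that $b^\diamond_\alpha$ sums to $1$) is the same as yours, so you have essentially reconstructed the intended argument.
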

%
%
%
%

($P^\diamond 2$) does not hold under the antecedents of Proposition~\ref{prp:update-defined}. Another definition is required as qualification:
\begin{definition}
\label{def:trustworthy}
We say evidence $\alpha$ is \emph{trustworthy} iff for all $w\in W$, if $w\not\models\alpha$, then $O(\alpha, w)=0$.
\end{definition}

The proposition below says that if $\alpha$ is trustworthy, one is rational w.r.t. some event, and $\alpha$ is a signal produced by that event, then one should accept $\alpha$ in the updated belief state.
\begin{proposition}
If $\alpha$ is trustworthy, there exists an event $e\in\varepsilon$ which is event-rational and $\alpha$ is an $e$-signal, then $b^\diamond_\alpha(\alpha)=1$ (i.e., then ($P^\diamond 2$) holds).
\label{prp:update-postulate2}
\end{proposition}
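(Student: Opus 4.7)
The plan is to split the proof into two pieces: first confirm that $b^\diamond_\alpha$ is a legitimate belief state under the given hypotheses (so that $b^\diamond_\alpha(\alpha)$ is meaningful), then show that all its mass lies on $\alpha$-worlds.

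For the first piece, I would argue that the normalizing factor $\gamma$ is strictly positive. Event-rationality of $e$ combined with the fact that $T(w,e,\cdot)$ is a probability distribution (so some $w'$ has $T(w,e,w')>0$) forces $E(e,w)>0$ for every $w\in W$. Pick any $w_0$ in the support of $b$ (which exists since $b$ sums to one). Since $T(w_0,e,\cdot)$ is a distribution, there is a $w'_0$ with $T(w_0,e,w'_0)>0$; then the $e$-signal property guarantees $O(\alpha,w'_0)>0$. Plugging these into the update formula yields a strictly positive numerator for $b^\diamond_\alpha(w'_0)$, hence $\gamma>0$ and $b^\diamond_\alpha$ is a proper distribution.

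For the second piece, trustworthiness does all the work. For any $w'\not\models\alpha$, the trustworthiness hypothesis gives $O(\alpha,w')=0$, so the whole numerator in the definition of $b^\diamond_\alpha(w')$ vanishes and $b^\diamond_\alpha(w')=0$. Summing over all worlds,
\[
b^\diamond_\alpha(\alpha)=\sum_{w'\models\alpha}b^\diamond_\alpha(w')=\sum_{w'\in W}b^\diamond_\alpha(w')=1,
\]
which is the desired conclusion.

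The only genuinely delicate step is establishing $\gamma>0$, since trustworthiness itself only zeroes out observation weights on $\lnot\alpha$-worlds and says nothing about positivity anywhere. The argument above handles this by exploiting event-rationality to get universal positivity of $E(e,\cdot)$, and the $e$-signal condition to transfer reachability into positive observation weight on a reachable arrival world. Everything else is routine bookkeeping with the summation defining $b\diamond\alpha$.
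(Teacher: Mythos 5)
Your proof is correct and follows essentially the same route as the paper: trustworthiness forces $b^\diamond_\alpha(w')=0$ on every $\lnot\alpha$-world, and the fact that $b^\diamond_\alpha$ is a genuine belief state then gives $b^\diamond_\alpha(\alpha)=1$ (the paper phrases this step contrapositively, you phrase it directly). The only difference is that where the paper simply cites the ($\Leftarrow$) direction of Proposition~\ref{prp:update-defined} for the belief-state claim, you reconstruct that argument inline via event-rationality and the $e$-signal condition to show $\gamma>0$, which is exactly the content of the cited (omitted) proof.
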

\begin{proof}
Not $\vdash\lnot\alpha$ is assumed by default.
Recall that $b^\diamond_\alpha(\alpha)=\sum_{w\in W,w\models\alpha}b^\diamond_\alpha(w)$.
Referring to the ($\Leftarrow$) part of the proof of Proposition~\ref{prp:update-defined}, $b^\diamond_\alpha(\alpha)$ is a belief state and thus $\sum_{w\in W}b^\diamond_\alpha(w)=1$.
Hence, for $b^\diamond_\alpha(\alpha)$ to be less than 1, there must exist a $w'\in W$ s.t. $w'\not\models\alpha$ and $b^\diamond_\alpha(w')>0$. But then $O(\alpha, w')>0$. Therefore, for ($P^\diamond 2$) not to hold, an agent needs to believe that $O(\alpha, w')>0$ for some world $w'$ where $w'\not\models\alpha$. But then $\alpha$ cannot be trustworthy (i.e., then ($P^\diamond 2$) holds.
\end{proof}

\begin{definition}[G\"ardenfors, 1988]
A probabilistic belief change operation $\circ$ is said to be \emph{preservative} iff for all belief states $P$ and for all propositions $\alpha$ and $\beta$, if $P(\alpha)>0$ and $P(\beta)=1$, then $P^\circ_\alpha(\beta)=1$.
\label{def:preservative}
\end{definition}

\begin{proposition}
Operation $\diamond$ is not preservative.
\label{prp:update-not-preservative}
\end{proposition}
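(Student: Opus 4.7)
The plan is to exhibit a counterexample: a stochastic belief change model $M$, a belief state $b$, and propositions $\alpha, \beta$ with $b(\alpha) > 0$ and $b(\beta) = 1$ but $b^\diamond_\alpha(\beta) < 1$. Since preservativity is a universal claim, a single concrete instance suffices.

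The conceptual reason preservativity should fail is that $\diamond$ first propagates belief forward through the transition function $T$ and then reweights by the observation likelihood $O(\alpha, \cdot)$. Either step can place probability mass outside $\Vert\beta\Vert$: a world satisfying $\beta$ can transition under some event to a world violating $\beta$, and $O(\alpha, w')$ need not vanish on worlds $w' \not\models \beta$ unless the observation is trustworthy in the sense of Definition~\ref{def:trustworthy}. My strategy is therefore to take $\alpha = \beta$ equal to an atomic proposition, concentrate all initial belief on a single $\beta$-world, and engineer $T$ and $O$ so that mass leaks out of $\Vert\beta\Vert$ and is not killed by the observation reweighting.

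Concretely, I would take vocabulary $\{q\}$ with $w_1 = 1$ and $w_2 = 0$, set $\alpha = \beta = q$, and choose $b = \langle 1, 0\rangle$ so that $b(\alpha) > 0$ and $b(\beta) = 1$ hold automatically. I would introduce a single event $e$ with $E(e, w) = 1$ everywhere and a transition function that splits the mass of $w_1$ between $w_1$ and $w_2$, e.g.\ $T(1, e, 1) = T(1, e, 0) = 0.5$ (values on $w_2$ are irrelevant since $b(w_2) = 0$), together with a deliberately untrustworthy observation function such as $O(q, 1) = O(q, 0) = 0.5$. A brief computation of the unnormalized update followed by normalization then yields $b^\diamond_q$ with positive mass on $w_2$, so $b^\diamond_q(q) < 1$, contradicting preservativity.

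There is no real obstacle beyond arithmetic verification. The only subtlety is ensuring that the chosen model avoids antecedents that would accidentally force preservation in this instance. In particular my $O$ must not be trustworthy, because Proposition~\ref{prp:update-postulate2} shows that under trustworthiness (together with event-rationality and the signal condition) the updated belief concentrates on $\Vert\alpha\Vert$, which combined with $\beta = \alpha$ would trivially preserve $\beta$. So the counterexample must genuinely exploit either non-trustworthiness or a mismatch between $\alpha$ and $\beta$, and I would flag this design choice explicitly when presenting the example.
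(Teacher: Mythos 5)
Your counterexample is correct: with $b=\langle1,0\rangle$, a single event $e$ with $E(e,w)=1$, $T(1,e,1)=T(1,e,0)=0.5$, and $O(q,1)=O(q,0)=0.5$, the unnormalized updated masses are $0.25$ at each world, so $b^\diamond_q=\langle0.5,0.5\rangle$ and $b^\diamond_q(q)=0.5<1$ while $b(q)=1>0$, refuting preservativity; the paper omits its own proof of this proposition, but your construction is exactly the style of counterexample it uses for its analogous negative results (cf.\ the proofs of Proposition~\ref{prp:not-necess-accepted} and of the $\mathsf{OGI}$ half of Proposition~\ref{prp:revision-is-preservative}). Two points of hygiene: specify $T(0,e,\cdot)$ explicitly (e.g., $T(0,e,0)=1$) so $T$ is a well-defined transition function, and take $\Omega=\{q,\lnot q\}$ with $O(\lnot q,w)=0.5$ so that $\sum_{\alpha\in\Omega}O(\alpha,w)=1$ holds; neither choice affects the computation. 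Your closing remark is also on target: the example necessarily violates trustworthiness (since $O(q,0)>0$ although $0\not\models q$), which is precisely how it escapes the positive results of Propositions~\ref{prp:update-postulate2} and~\ref{prp:accepting-dueto-simulation}.
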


\begin{definition}
We say evidence $\alpha$ is \emph{$\beta$-trustworthy} if for all $w\in W$, if $w\not\models\beta$, then $O(\alpha, w)=0$.
\end{definition}

\begin{proposition}
If $b^\diamond_\alpha(\beta)$ is a belief state, $b(\beta)=1$ and $\alpha$ is $\beta$-trustworthy, then $b^\diamond_\alpha(\beta)=1$.
\label{prp:accepting-dueto-simulation}
\end{proposition}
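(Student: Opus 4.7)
The plan is to mimic almost verbatim the proof of Proposition~\ref{prp:update-postulate2}, but with $\beta$-trustworthiness in place of plain trustworthiness. I would present the argument by contradiction on the size of $b^\diamond_\alpha(\beta)$, exploiting the fact that $b^\diamond_\alpha$ is assumed to be a belief state, and then reading off from the definition of $\diamond$ that every non-$\beta$-world's updated mass is killed by the factor $O(\alpha,w')$.

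Concretely, I would proceed as follows. First expand
\[
b^\diamond_\alpha(\beta) \;=\; \sum_{w'\in W,\ w'\models\beta} b^\diamond_\alpha(w').
\]
Since $b^\diamond_\alpha$ is a belief state, $\sum_{w'\in W}b^\diamond_\alpha(w')=1$, so it suffices to show that $b^\diamond_\alpha(w')=0$ for every $w'\not\models\beta$. Pick any such $w'$. By the definition of $\diamond$,
\[
b^\diamond_\alpha(w') \;=\; \frac{1}{\gamma}\,O(\alpha,w')\sum_{w\in W}\sum_{e\in\varepsilon}T(w,e,w')E(e,w)b(w).
\]
Because $\alpha$ is $\beta$-trustworthy and $w'\not\models\beta$, we have $O(\alpha,w')=0$, and hence $b^\diamond_\alpha(w')=0$. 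Summing back then gives $b^\diamond_\alpha(\beta)=1$.

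The only subtle point, and what I expect to be the main (minor) obstacle, is the role of the hypothesis $b(\beta)=1$: it is not actually consumed in the chain of reasoning above. My interpretation is that it is present to guarantee that the scenario is non-vacuous, specifically that the normalizer $\gamma$ is positive so that $b^\diamond_\alpha$ really is a belief state rather than the assumption of ``being a belief state'' being hollow. If one instead wants to \emph{derive} that $b^\diamond_\alpha$ is a belief state from $b(\beta)=1$ and $\beta$-trustworthiness (together with event- and observation-rationality), one would argue, analogously to Proposition~\ref{prp:update-defined}, that the combined support of $O(\alpha,\cdot)$ and the $\beta$-mass propagated by $T$ and $E$ yields some $w'\models\beta$ with positive numerator, hence $\gamma>0$. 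I would mention this in a short remark but keep the main proof as the three-line argument above.
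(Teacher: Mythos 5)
Your proof is correct and takes essentially the same route as the paper's: the paper argues by contradiction that a non-$\beta$-world $w^\times$ with $b^\diamond_\alpha(w^\times)>0$ would force $O(\alpha,w^\times)>0$, contradicting $\beta$-trustworthiness, which is exactly your direct observation that the factor $O(\alpha,w')=0$ annihilates every non-$\beta$-world's term, so that summing over the belief state yields $b^\diamond_\alpha(\beta)=1$. Your side remark is also accurate: the paper's own proof mentions $b(\beta)=1$ only in passing and likewise never actually consumes it in the deductive chain.
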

\begin{proof}
$\sum_{w\in W}b^\diamond_\alpha(w)=1$.
Hence, for $b^\diamond_\alpha(\beta)$ to be $<1$, there must exist a $w^\times\in W$ s.t. $w^\times\not\models\beta$ and $b^\diamond_\alpha(w^\times)>0$.
And because $b(\beta)=1$, $b(w^\times)=0$. 
So some probability mass must have been shifted from some $\beta$-world to the non-$\beta$-world $w^\times$.

By definition, $b^\diamond_\alpha(w^\times)$ $=$ $\frac{1}{\gamma}$ $O(\alpha,w^\times)$ $\sum_{w\in W}\sum_{e\in\varepsilon}T(w,e,w^\times)E(e, w)b(w)$. So for $b^\diamond_\alpha(w^\times)$ to be $>0$, $O(\alpha,w^\times)$ must be $>0$.

However, because $\alpha$ is $\beta$-trustworthy, $O(\alpha, w^\times)=0$.
Hence, $O(\alpha,w^\times)\not>0$ and $b^\diamond_\alpha(\beta)\not<1$.
\end{proof}

\begin{proposition}
$b^\diamond_{\alpha\land\beta}\neq(b^\diamond_\alpha)^\diamond_\beta$.
\end{proposition}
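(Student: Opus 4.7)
\medskip
\noindent\textbf{Proof plan.} The plan is to exhibit a small counterexample, since in context the claim asserts that the equation fails in general. The structural reason to expect failure is that $b^\diamond_{\alpha\land\beta}$ propagates the belief through the transition/event dynamics exactly once and then weights by the joint observation probability $O(\alpha\land\beta,\cdot)$, whereas $(b^\diamond_\alpha)^\diamond_\beta$ propagates through the dynamics twice, interleaved with weighting by $O(\alpha,\cdot)$ and then $O(\beta,\cdot)$. Unless the transition function is the identity, a second pass through $T$ genuinely redistributes mass, so the two final distributions will generically diverge.

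For a concrete construction I would take the simplest possible setting: vocabulary $\{q\}$ (two worlds $w_1,w_2$), a single event $e$ with $E(e,w)=1$ for both worlds, a mixing transition function such as $T(w_1,e,w_1)=0.5$, $T(w_1,e,w_2)=0.5$, $T(w_2,e,w_1)=0.3$, $T(w_2,e,w_2)=0.7$, and an observation function with $O(q,w_1)>O(q,w_2)>0$. Setting $\alpha=\beta=q$ reduces the task to showing that $\diamond_q$ is not idempotent, since $\alpha\land\beta\equiv\alpha$ and $(P^\diamond 3)$ guarantees $b^\diamond_{\alpha\land\beta}=b^\diamond_\alpha$ on the left-hand side. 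Direct substitution into the definition of $\diamond$ starting from a non-degenerate belief state such as $b=\langle 0.5,0.5\rangle$ then yields numerically distinct distributions for $b^\diamond_q$ and $(b^\diamond_q)^\diamond_q$.

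The main obstacle is not conceptual but bookkeeping: the parameters must be chosen so that both normalizing constants are strictly positive (so both sides are bona fide belief states, as guaranteed by Proposition~\ref{prp:update-defined} under suitable rationality conditions) and so that the two outputs do not coincidentally agree. The mixing transition above ensures that a second application of the dynamics pulls mass back toward the $T$-stationary distribution, while the observation weighting continues to pull mass toward $w_1$; these two influences do not cancel, so the iterated update lands at a different point than the single update. Once one verifies $b^\diamond_q(w_1)\neq (b^\diamond_q)^\diamond_q(w_1)$ for the chosen parameters, the proposition follows.
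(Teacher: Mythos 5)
Your proposal is correct in spirit and, like the paper, it is a proof by explicit counterexample; but your instantiation is genuinely different from the paper's. The paper reuses the model from the proof of Proposition~\ref{prp:not-necess-accepted} with two distinct sentences, $\alpha=q$ and $\beta=q\land r$, and computes $b^\diamond_q=\langle0.82,0,0,0.18\rangle$, $(b^\diamond_q)^\diamond_{q\land r}=\langle1,0,0,0\rangle$ versus $b^\diamond_{q\land r}=\langle0.875,0,0,0.125\rangle$; you instead take $\alpha=\beta=q$ over a two-world domain and use ($P^\diamond 3$) to reduce the claim to non-idempotence of $\diamond_q$. Your reduction is attractive because it isolates the structural cause of failure (the transition dynamics are applied twice on the right-hand side but only once on the left), whereas the paper's example shows the failure for genuinely distinct $\alpha$ and $\beta$, which is arguably the more informative instance of the stated inequation, and it supplies the completed arithmetic. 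One caution about your construction: the generic constraint $O(q,w_1)>O(q,w_2)>0$ does not by itself guarantee a counterexample. With your $T$ and $b=\langle0.5,0.5\rangle$, the choice $O(q,w_1)=1.5\,O(q,w_2)$ (e.g.\ $0.3$ and $0.2$) makes $b^\diamond_q=b$ a fixed point, so both sides coincide. You flagged that the outputs must not "coincidentally agree," but to close the proof you must commit to concrete values and finish the computation; for instance $O(q,w_1)=0.8$, $O(q,w_2)=0.2$ gives $b^\diamond_q(w_1)\approx0.73$ while $(b^\diamond_q)^\diamond_q(w_1)\approx0.76$, which completes the argument.
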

\begin{proof}
For instance, consider the example used in the proof of Proposition~\ref{prp:not-necess-accepted}. Let $\alpha$ be $q$ and let $\beta$ be $q\land r$. Note that $\alpha\land\beta$ is then logically equivalent to $q\land r$.
Let $O(q\land r,11)=O(q\land r,10)=0.5$ and $O(q\land r,01)=O(q\land r,00)=0$.

We know that $b^\diamond_q=\langle0.82,0,0,0.18\rangle$.
Then $(b^\diamond_q)^\diamond_{q\land r} = \langle1,0,0,0\rangle$.
On the other hand, $b^\diamond_{q\land r} = \langle0.875,0,0,0.125\rangle$.
\end{proof}

\subsection{Revision}

Using Bayes' Rule\footnote{Bayes' Rule states (in the notation of this paper) that $P(w\mid \alpha) = P(\alpha\mid w)P(w)/P(\alpha)$ or $P(w\mid \alpha) = P(\alpha\mid w)P(w)/\sum_{w'\in W}P(\alpha\mid w')P(w')$.}
, $P(w_n\mid \alpha)$ can be determined:
\[
P(w\mid \alpha):=\frac{O(\alpha, w)b(w)}{\sum_{w'\in W}O(\alpha, w')b(w')}.\]
Note that if $O(\alpha, w)=0$, then $P(w\mid\alpha)=0$.

It is not yet universally agreed what revision means in a probabilistic setting. In classical belief change, it is understood that if the new information $\alpha$ is consistent with the agent's current beliefs $\mathit{KB}$, then revision is equivalent to belief expansion (denoted $+$), where expansion is the logical consequences of $\mathit{KB}\uni\{\alpha\}$.
It is mostly agreed upon
that Bayesian conditioning corresponds to classical belief expansion. This is evidenced by Bayesian conditioning ($\mathsf{BC}$) being defined only when $b(\alpha)\neq0$.
In other words,
one could define revision to be 
\[
b\:\mathsf{BC}\:\alpha :=\{(w,p)\mid w\in W, p= P( w\mid\alpha)\},	
\]
\textit{as long as} $P(\alpha) \neq 0$.\footnote{Note that in my notation, $b(\alpha)$ is equivalent to $P(\alpha)$.}

To accommodate cases where $b(\alpha)\neq0$, that is, where $\alpha$ contradicts the agent's current beliefs and its beliefs need to be revised in the stronger sense, we shall make use of \textit{imaging}.
Imaging was introduced by \citet{l76} as a means of revising a probability function. It has also been discussed in the work of, for instance, \citet{g88,dp93,cnss14,rm15b}.
The following version of imaging must not be regarded as a fundamental part of the larger belief change framework presented here; it should be regarded as a place-holder or suggestion for the `revision-module' of the framework.
Informally, Lewis's original solution for accommodating contradicting evidence $\alpha$ is to move the probability of each world to its closest, $\alpha$-world. Lewis made the strong assumption that every world has a \textit{unique} closest $\alpha$-world. More general versions of imaging allow worlds to have \textit{several}, equally proximate, closest worlds.

\citet{g88} calls one of his generalizations of Lewis's imaging \textit{general imaging}. Our method is also a generalization.
We thus refer to his as \textit{G\"ardenfors's general imaging} and to our method as \textit{generalized imaging} to distinguish them. It should be noted that these imaging methods are general revision methods and can be used in place of Bayesian conditioning for expansion. ``Thus imaging is a more general method of describing belief changes than conditionalization,'' \citep[p.~112]{g88}.

Let $\mathit{Min}(\alpha,w,d)$ be the set of $\alpha$-worlds closest to $w$ measured with $d$. Formally,
\begin{align*}
&\mathit{Min}(\alpha,w,d):=\\
&\qquad\{w'\in||\alpha||\mid \forall w''\in||\alpha||, d(w',w)\leq d(w'',w)\},
\end{align*}
where $d(\cdot)$ is some acceptable measure of distance between worlds (e.g., Hamming or Dalal distance). It must also obey the faithfulness condition that for every world $w$, $d(w,w)<d(v,w)$ for all $v\neq w$.
\begin{example}
\label{ex:1}
Let the vocabulary be $\{q,r,s\}$.
Let $\alpha$ be $(q\land r)\lor(q\land\lnot r\land s)$. Suppose $d$ is Hamming distance. Then
\begin{align*}
&\mathit{Min}((q\land r)\lor(q\land\lnot r\land s),111,d)=\{111\}\\
&\mathit{Min}((q\land r)\lor(q\land\lnot r\land s),110,d)=\{110\}\\
&\mathit{Min}((q\land r)\lor(q\land\lnot r\land s),101,d)=\{101\}\\
&\mathit{Min}((q\land r)\lor(q\land\lnot r\land s),100,d)=\{110,101\}\\
&\mathit{Min}((q\land r)\lor(q\land\lnot r\land s),011,d)=\{111\}\\
&\mathit{Min}((q\land r)\lor(q\land\lnot r\land s),010,d)=\{110\}\\
&\mathit{Min}((q\land r)\lor(q\land\lnot r\land s),001,d)=\{101\}\\
&\mathit{Min}((q\land r)\lor(q\land\lnot r\land s),000,d)=\{110,101\}
\end{align*}
\end{example}

Then generalized imaging (denoted $\mathsf{GI}$) is defined as
\begin{definition}
\begin{align*}
&b\:\mathsf{GI}\:\alpha :=\big\{(w,p)\mid w\in W, p=0 \mbox{ if } w\not\in||\alpha||,\\
& \qquad \mbox{else }p=\sum_{\substack{w'\in W\\w\in\mathit{Min}(\alpha,w',d)}}b(w')/|\mathit{Min}(\alpha,w',d)|\big\}.
\end{align*}
\end{definition}
\begin{example}
\label{ex:2}
Continuing on Example~\ref{ex:1}:
Let $b=\langle0,0.1,0,0.2,0,0.3,0,0.4\rangle$.

$(q\land r)\lor(q\land\lnot r\land s)$ is abbreviated as $\alpha$.

\smallskip
$b^\mathsf{GI}_\alpha(111) = \sum_{\substack{w'\in W\\111\in\mathit{Min}(\alpha,w',d)}}b(w')/|\mathit{Min}(\alpha,w',d)|$ $=$ $b(111)/|\mathit{Min}(\alpha,111,d)| + b(011)/|\mathit{Min}(\alpha,011,d)|$ $=$ $0/1 + 0/1$ $=0$.

\smallskip
$b^\mathsf{GI}_\alpha(110) = \sum_{\substack{w'\in W\\110\in\mathit{Min}(\alpha,w',d)}}b(w')/|\mathit{Min}(\alpha,w',d)|$ $=$ $b(110)/|\mathit{Min}(\alpha,110,d)| + b(100)/|\mathit{Min}(\alpha,100,d)| + b(010)/|\mathit{Min}(\alpha,010,d)| + b(000)/|\mathit{Min}(\alpha,000,d)|$ $=$ $0.1/1 + 0.2/2 + 0.3/1 + 0.4/2$ $=0.7$.

\smallskip
$b^\mathsf{GI}_\alpha(101) = \sum_{\substack{w'\in W\\101\in\mathit{Min}(\alpha,w',d)}}b(w')/|\mathit{Min}(\alpha,w',d)|$ $=$ $b(101)/|\mathit{Min}(\alpha,101,d)| + b(100)/|\mathit{Min}(\alpha,100,d)| + b(001)/|\mathit{Min}(\alpha,001,d)| + b(000)/|\mathit{Min}(\alpha,000,d)|$ $=$ $0/1 + 0.2/2 + 0/1 + 0.4/2$ $=0.3$.

\smallskip
And $b^\mathsf{GI}_\alpha(100) = b^\mathsf{GI}_\alpha(011) = b^\mathsf{GI}_\alpha(010) = b^\mathsf{GI}_\alpha(001) = b^\mathsf{GI}_\alpha(000) = 0$.
\end{example}
Notice how the probability mass of non-$\alpha$-worlds is shifted to their closest $\alpha$-worlds. If a non-$\alpha$-world $w^\times$ with probability $p$ has $n$ closest $\alpha$-worlds (equally distant), then each of these closest $\alpha$-worlds gets $p/n$ mass from $w^\times$.

Recall that in the proposed framework, agents have access to an observation model (formalized via an observation function $O(\cdot,\cdot)$). Given enough computational power and time, it would be irrational for an agent to ignore its observation model when revising its beliefs. Another proposed definition for a stochastic belief revision operation based on imaging (denoted $\mathsf{OGI}$) is thus
\begin{definition}
\begin{align*}
&b\:\mathsf{OGI}\:\alpha :=\Big\{(w,p)\mid w\in W, p=\frac{O(\alpha,w)b^\mathsf{GI}_\alpha(w)}{\sum_{w'\in W}O(\alpha,w')b^\mathsf{GI}_\alpha(w')}\Big\},
\end{align*}
where the denominator is a normalizing factor.
\end{definition}

$b\:\mathsf{OGI}\:\alpha$ is \textit{not} defined as
\begin{align*}
& \Big\{(w,p)\mid w\in W, p=0 \mbox{ if } w\not\in||\alpha||,\\
& \quad \mbox{else }p=\sum_{\substack{w'\in W\\w\in\mathit{Min}(\alpha,w',d)}}O(\alpha,w')b(w')/|\mathit{Min}(\alpha,w',d)|\Big\},
\end{align*}
because $\alpha$ is assumed perceived in the new world $w$, not the old world $w'$.

Note that if $P(\cdot\mid\alpha)$ were used instead of $O(\alpha,\cdot)$, then $\mathsf{OGI}$ would be undefined whenever $b(\alpha)=0$. But this is exactly the problem we want to avoid by using imaging.
Another justification to rather use $O(\alpha,w)$ is that its value is positively correlated with $P(w\mid\alpha)$: If $O(\alpha,w)=0$, then $P(w\mid\alpha)=0$. If $P(w\mid\alpha)=1$, then $O(\alpha,w)$ is maximal in $b$ in the following sense: for all $w'\in W$, if $w'\neq w$, then either $b(w')=0$ or $O(\alpha,w')=0$, whereas $b(w)>0$ and $O(\alpha,w)>0$.

Note that the denominator my be zero, making $\mathsf{OGI}$ undefined in that case.
I shall deal with this issue a little bit later.

\begin{example}
\label{ex:3}
Recall from Example~\ref{ex:2} that $b^\mathsf{GI}_\alpha = \langle0,0.7,0.3,0,0,0,0,0\rangle$ and $\alpha$ is $(q\land r)\lor(q\land\lnot r\land s)$.
Let $O(\alpha,w)=0.3$, say, for all $w\in W$. Then $b^\mathsf{OGI}_\alpha =b^\mathsf{GI}_\alpha$.
Obviously, if the observation model carries no information with respect to $\alpha$, then it has no influence on the agent's revised beliefs.

Now let $O(\alpha,w)=0.3$ if $w\models r$, else $O(\alpha,w)=0.2$.
Then $b^\mathsf{OGI}_\alpha = \langle0.3\times0/0.23,0.2\times0.7/0.23,0.3\times0.3/0.23,
0.2\times0/0.23,0.3\times0/0.23,0.2\times0/0.23,
0.3\times0/0.23,0.2\times0/0.23\rangle =  \langle0,0.61,0.39,0,0,0,0,0\rangle$.
If the agent has an observation model telling it that $\alpha$ is more likely to be perceived in $r$-worlds than in $\lnot r$-worlds, then when it receives $\alpha$, the agent should be biased to believing that it is actually in an $r$-world.
However, the agent was certain that it was in a $\lnot r$-world when its belief state was $b$. $\mathsf{GI}$ thus pushes the agent to favour the $\alpha$-worlds being $\lnot r$-worlds. Hence, in this example there is tension between being in a $\lnot r$-world (due to previous beliefs) and being in an $r$-world (due to the observation model).
\end{example}

%
%
%

\begin{definition}
\begin{equation*}
b\:\mathsf{BCI}\:\alpha := \left\{
\begin{array}{rl}
b\:\mathsf{BC}\:\alpha & \text{if } b(\alpha) > 0\\
b\:\mathsf{OGI}\:\alpha & \text{if } b(\alpha) = 0
\end{array} \right.
\end{equation*}
\end{definition}

I denote the \emph{expansion} of belief state $b$ on $\alpha$ as $b^+_\alpha$ (resp., probability function $P$ on $\alpha$ as $P^+_\alpha$) and delay its definition till later.
$P_\bot$ is conventionally defined to be the absurd probability function which is defined to be $P_\bot(\delta)=1$ for all $\delta\in L$.

\citet{g88} proposed six rationality postulates for probabilistic belief revision. (Unless stated otherwise, it is assumed that $\alpha$ is logically satisfiable, i.e., $\vdash\lnot\alpha$ is false.)
\begin{enumerate}
\itemsep=0pt
\item $P^*_\alpha$ is a probability function
\item $P^*_\alpha(\alpha)=1$
\item If $\alpha\equiv\beta$, then $P^*_\alpha=P^*_\beta$
\item $P^*_\alpha \neq P_\bot$ iff not $\vdash\lnot\alpha$
\item If $P(\alpha)>0$, then $P^*_\alpha = P^+_\alpha$
\item If $P^*_\alpha(\beta)>0$, then $P^*_{\alpha\land\beta}=(P^*_\alpha)^+_\beta$.
\end{enumerate}

Instead of saying that the result of an operation is $P_\bot$, I simply say that the result is undefined. And by noting that the result of an operation is not a belief state if it is undefined, one can merge postulates 1 and 4.
The stochastic belief revision postulates in my notation are thus
\begin{itemize}
\itemsep=0pt
\item[]($P^*1$) $b^*_\alpha$ is a belief state iff not $\vdash\lnot\alpha$
\item[]($P^*2$) $b^*_\alpha(\alpha)=1$
\item[]($P^*3$) If $\alpha\equiv\beta$, then $b^*_\alpha=b^*_\beta$
\item[]($P^*4$) If $b(\alpha)>0$, then $b^*_\alpha = b^+_\alpha$
\item[]($P^*5$) If $b^*_\alpha(\beta)>0$, then $b^*_{\alpha\land\beta}=(b^*_\alpha)^+_\beta$.
\end{itemize}
I now test $\mathsf{OGI}$ and $\mathsf{BCI}$ against each of the five postulates.

Recall that if the denominator in the definition of $\mathsf{OGI}$ is zero, it is undefined.
To guarantee that $\mathsf{OGI}$ is defined, $\sum_{w'\in W}O(\alpha,w')b^\mathsf{GI}_\alpha(w')$ must be non-zero, that is, there must be at least one $w'\in W$ for which $O(\alpha,w')b^\mathsf{GI}_\alpha(w')>0$.
We know that when $w'\not\in||\alpha||$, $O(\alpha,w')b^\mathsf{GI}_\alpha(w') = b^\mathsf{GI}_\alpha(w') = 0$.
\begin{definition}
We say $\alpha$ is \emph{weakly observable} iff there exists a $w\in W$ such that $w\models\alpha$ and $O(\alpha,w)>0$.
We say $\alpha$ is \emph{strongly observable} iff for all $w\in W$ for which $w\models\alpha$, $O(\alpha,w)>0$.
\end{definition}

\begin{proposition}
\label{prp:revision-postulate-OGI-1}
When $*$ is $\mathsf{OGI}$, postulate ($P^*1$), in general, does not hold, but does hold if evidence $\alpha$ is strongly observable.
\end{proposition}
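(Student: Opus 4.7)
The statement decomposes into (i) a counterexample establishing the general failure of ($P^*1$) and (ii) a proof of the biconditional under strong observability. I would attack them in that order.

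For (i), the instructive failure mode is the one where $\alpha$ is satisfiable but not even weakly observable, so no matter how generalised imaging reallocates mass, the denominator in $\mathsf{OGI}$ collapses to $0$. I would exhibit the smallest such instance: vocabulary $\{q\}$, $b = \langle 0, 1 \rangle$, $\Omega = \{q, \lnot q\}$, with $O(q, 1)=0$, $O(\lnot q,1)=1$, $O(q, 0)=1$, $O(\lnot q,0)=0$, and evidence $\alpha = q$ (clearly $\not\vdash\lnot q$). Generalised imaging transports the unit mass at the $\lnot q$-world to its unique closest $q$-world, so $b^{\mathsf{GI}}_q = \langle 1, 0\rangle$. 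But then $\sum_{w'}O(q,w')b^{\mathsf{GI}}_q(w') = O(q,1)\cdot 1 + O(q,0)\cdot 0 = 0$, so $b^{\mathsf{OGI}}_q$ is undefined and ($P^*1$) fails.

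For (ii) both directions of the biconditional must be verified. The ``only if'' direction holds regardless of observability: if $\vdash\lnot\alpha$ then $||\alpha||=\emptyset$, the first clause of $\mathsf{GI}$ forces $b^{\mathsf{GI}}_\alpha\equiv 0$, the denominator of $\mathsf{OGI}$ is zero, and nothing is produced. For the ``if'' direction, assume $\alpha$ is strongly observable and satisfiable. The pivotal observation is that $b^{\mathsf{GI}}_\alpha$ is itself a probability distribution supported entirely on $||\alpha||$; swapping the order of summation gives $\sum_{w\in ||\alpha||} b^{\mathsf{GI}}_\alpha(w) = \sum_{w'} b(w')\sum_{w\in\mathit{Min}(\alpha,w',d)} 1/|\mathit{Min}(\alpha,w',d)| = \sum_{w'}b(w')=1$, where non-emptiness of $\mathit{Min}(\alpha,w',d)$ for each $w'$ follows from $||\alpha||\neq\emptyset$. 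Hence some $w^\star\in||\alpha||$ has $b^{\mathsf{GI}}_\alpha(w^\star)>0$, and strong observability then supplies $O(\alpha,w^\star)>0$, so the denominator is strictly positive. The normalisation built into $\mathsf{OGI}$ then immediately makes $b^{\mathsf{OGI}}_\alpha$ a bona fide belief state.

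The only step with any real content is the mass-preservation identity for $b^{\mathsf{GI}}_\alpha$; this is what powers the positive half by forcing $b^{\mathsf{GI}}_\alpha$ to put positive weight on some $\alpha$-world, on which strong observability can then bite. Everything else is bookkeeping around the definitions of $\mathsf{GI}$ and $\mathsf{OGI}$.
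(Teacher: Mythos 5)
Your proposal is correct and follows essentially the same route as the paper: a counterexample in which the $\mathsf{OGI}$ denominator vanishes, plus the mass-preservation identity $\sum_{w}b^\mathsf{GI}_\alpha(w)=1$ combined with strong observability to get a strictly positive denominator and hence a belief state after normalisation. The only cosmetic difference is the counterexample itself --- the paper deliberately picks a \emph{weakly observable} $\alpha$ (showing weak observability is not enough), whereas yours uses a non-observable $\alpha$; both suffice for the proposition as stated.
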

\begin{proof}
Firstly, observe that $b(w')=\sum_{|\mathit{Min}(\alpha,w',d)|}b(w')/|\mathit{Min}(\alpha,w',d)|$.
Therefore,
\begin{eqnarray*}
1	&=&\sum_{w'\in w}b(w')\\
	&=&\sum_{w'\in w}\sum_{|\mathit{Min}(\alpha,w',d)|}b(w')/|\mathit{Min}(\alpha,w',d)|
					\\
	&=&\sum_{w'\in w,|\mathit{Min}(\alpha,w',d)|}b(w')/|\mathit{Min}(\alpha,w',d)|\\							&=&\sum_{w'\in w,w\in W,w\in\mathit{Min}(\alpha,w',d)}b(w')/|\mathit{Min}(\alpha,w',d)|\\
	&=&\sum_{w\in W}\sum_{w'\in w,w\in\mathit{Min}(\alpha,w',d)}b(w')/|\mathit{Min}(\alpha,w',d)|\\
	&=&\sum_{w\in W}b^\mathsf{GI}_\alpha(w).
\end{eqnarray*}

Let $b=\langle0,0.1,0,0.2,0,0.3,0,0.4\rangle$ and $\alpha$ be
$(q\land r)\lor(q\land\lnot r\land s)$.
Let $O(\alpha,111)=0.9$ and $O(\alpha,w)=0$ for all $w\in W$, $w\neq111$.
(Notice that $\alpha$ is weakly observable.)
From Example~\ref{ex:2}, we know that $b^\mathsf{GI}_\alpha(111)=0$, implying that $b^\mathsf{OGI}_\alpha(111)=0$, and one can deduce that $b^\mathsf{OGI}_\alpha(w)=0$ for all $w\in W$, due to the specification of the observation model.

Now, let $\alpha$ be strongly observable: let $O(\alpha,111)=O(\alpha,110)=O(\alpha,101)=0.1$, else $O(\alpha,\cdot)=0$. Then $b^\mathsf{OGI}_\alpha = \langle0,0.7,0.3,0,0,0,0,0\rangle$.
In general, let $O(\alpha,w)>0$ for all $w\in W$ for which $w\models\alpha$. By definition of $\mathsf{GI}$, the probability mass of all non-$\alpha$-worlds is shifted to their closest $\alpha$-worlds; the total mass (of the $\alpha$-worlds) thus remains 1. Hence, $b^\mathsf{GI}_\alpha(\alpha)=1$ and there exists a $w'\models\alpha$ s.t. $b^\mathsf{GI}_\alpha(w')>0$. Now, by definition of strong observability,  $O(\alpha,w')>0$. Therefore, $O(\alpha,w')b^\mathsf{GI}_\alpha(w')>0$. And due to the normalizing effect of the denominator in the definition of $\mathsf{OGI}$, $b^\mathsf{OGI}_\alpha$ is a belief state.
\end{proof}

\begin{proposition}
\label{prp:revision-postulate-OGI-2}
When $*$ is $\mathsf{OGI}$, postulate ($P^*2$), in general, does not hold and does hold when $\alpha$ is strongly observable.
\end{proposition}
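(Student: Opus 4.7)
The plan is to mirror the structure of the preceding proposition: exhibit a counterexample for the ``in general'' part by reusing the same concrete setup, then give a short conceptual argument for the strongly observable case. The key observation powering both halves is that, by the very definition of $\mathsf{GI}$, $b^\mathsf{GI}_\alpha(w) = 0$ whenever $w\not\in||\alpha||$; this property is inherited by $\mathsf{OGI}$ (its numerator contains $b^\mathsf{GI}_\alpha(w)$ as a factor).

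For the negative direction, I would recycle the instance from the proof of Proposition~\ref{prp:revision-postulate-OGI-1}: take $b=\langle0,0.1,0,0.2,0,0.3,0,0.4\rangle$, let $\alpha$ be $(q\land r)\lor(q\land\lnot r\land s)$, and set $O(\alpha,111)=0.9$ with $O(\alpha,w)=0$ elsewhere. Example~\ref{ex:2} gives $b^\mathsf{GI}_\alpha(111)=0$, and the observation model kills every remaining contribution, so $O(\alpha,w)b^\mathsf{GI}_\alpha(w)=0$ for all $w\in W$. Thus the denominator in $\mathsf{OGI}$ vanishes, $b^\mathsf{OGI}_\alpha$ is undefined, and in particular $b^\mathsf{OGI}_\alpha(\alpha)\ne 1$. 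This shows ($P^*2$) fails in general.

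For the positive direction, assume $\alpha$ is strongly observable. By Proposition~\ref{prp:revision-postulate-OGI-1}, $b^\mathsf{OGI}_\alpha$ is a well-defined belief state, so $\sum_{w\in W}b^\mathsf{OGI}_\alpha(w)=1$. Since $b^\mathsf{GI}_\alpha(w)=0$ for every $w\not\models\alpha$, we have $b^\mathsf{OGI}_\alpha(w)=0$ for every such $w$ as well. Therefore
\[
b^\mathsf{OGI}_\alpha(\alpha)=\sum_{w\models\alpha}b^\mathsf{OGI}_\alpha(w)=\sum_{w\in W}b^\mathsf{OGI}_\alpha(w)=1,
\]
establishing ($P^*2$).

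I do not foresee any real obstacle: once the earlier proposition is in place, both halves follow essentially by inspection of where the mass of $b^\mathsf{GI}_\alpha$ can sit. The only subtle point is being clear that the ``in general'' failure is inherited from a failure of ($P^*1$) (so $b^\mathsf{OGI}_\alpha$ is simply not a belief state), rather than from $b^\mathsf{OGI}_\alpha$ being a belief state whose probability on $\alpha$ happens to be less than one --- the latter cannot occur, because $\mathsf{OGI}$ can only place mass on $\alpha$-worlds whenever it is defined at all.
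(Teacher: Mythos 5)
Your proof is correct and takes essentially the same route as the paper, whose own proof simply states that the result ``follows directly from an understanding of the proof of Proposition~\ref{prp:revision-postulate-OGI-1}''; you have just made that dependence explicit (reusing the weakly-observable counterexample with vanishing denominator, and noting that under strong observability the mass of $b^\mathsf{OGI}_\alpha$ can only sit on $\alpha$-worlds). Your closing remark that the general failure is one of definedness rather than of a belief state assigning $\alpha$ probability below one is a fair and accurate clarification, not a deviation.
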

\begin{proof}
This result follows directly from an understanding of the proof of Proposition~\ref{prp:revision-postulate-OGI-1}.
\end{proof}

\begin{proposition}
\label{prp:revision-postulate-OGI-3}
When $*$ is $\mathsf{OGI}$, postulate ($P^*3$) holds.
\end{proposition}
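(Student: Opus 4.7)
The plan is to reduce postulate ($P^*3$) to two facts, one about the inner $\mathsf{GI}$ operation and one about the observation model $O$. Since the definition of $b\:\mathsf{OGI}\:\alpha$ is built entirely out of $b^{\mathsf{GI}}_\alpha$ and $O(\alpha,\cdot)$, if both of these quantities are invariant under replacing $\alpha$ by a logically equivalent $\beta$, then the postulate follows directly.

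First I would show that $b^{\mathsf{GI}}_\alpha = b^{\mathsf{GI}}_\beta$ whenever $\alpha\equiv\beta$. The key observation is that $\mathit{Min}(\alpha,w,d)$ is a function only of the set $\|\alpha\|$ of $\alpha$-worlds and the distance $d$, not of the syntactic form of $\alpha$. Since $\alpha\equiv\beta$ gives $\|\alpha\|=\|\beta\|$, we immediately get $\mathit{Min}(\alpha,w,d)=\mathit{Min}(\beta,w,d)$ for every $w\in W$, and consequently the condition ``$w\not\in\|\alpha\|$'' in the definition of $\mathsf{GI}$ coincides with ``$w\not\in\|\beta\|$''. Plugging these equalities into the defining summation for $\mathsf{GI}$ yields $b^{\mathsf{GI}}_\alpha(w)=b^{\mathsf{GI}}_\beta(w)$ for every world $w$.

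Next I would invoke the clause of Definition~\ref{def:sbc-model} which explicitly requires that whenever $\alpha\equiv\beta$, $O(\alpha,w)=O(\beta,w)$ for every $w\in W$. Combining this with the previous step, both the numerator $O(\alpha,w)b^{\mathsf{GI}}_\alpha(w)$ and the denominator $\sum_{w'\in W}O(\alpha,w')b^{\mathsf{GI}}_\alpha(w')$ in the definition of $\mathsf{OGI}$ are equal to the corresponding quantities computed with $\beta$. Hence $b^{\mathsf{OGI}}_\alpha(w)=b^{\mathsf{OGI}}_\beta(w)$ for every $w$, i.e., ($P^*3$) holds. A brief caveat: if the denominator happens to vanish, then $\mathsf{OGI}$ is undefined, but in that case it is undefined for both $\alpha$ and $\beta$ simultaneously (by the same equalities), so the postulate is preserved in the trivial sense already adopted for the other postulates.

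There is no serious obstacle in this proof; the entire argument is a syntactic-to-semantic translation relying on (i) the $\|\cdot\|$-dependence of $\mathit{Min}$ and (ii) the equivalence-respecting clause built into $O$. The only mildly subtle point worth stating cleanly is that one must verify equivalence invariance at the level of \emph{both} ingredients of $\mathsf{OGI}$ before combining them, which is why I would organise the write-up as two short lemmas (or two displayed equations) followed by a one-line substitution.
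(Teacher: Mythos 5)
Your proof is correct: the paper omits its own proof of this proposition (it is one of those ``available on request''), and your argument is the evident intended one, resting exactly on the two relevant facts --- that $\mathit{Min}(\alpha,w,d)$ and hence $b^{\mathsf{GI}}_\alpha$ depend only on $\|\alpha\|$, and that Definition~\ref{def:sbc-model} stipulates $O(\alpha,w)=O(\beta,w)$ whenever $\alpha\equiv\beta$. Your caveat that a vanishing denominator makes $\mathsf{OGI}$ undefined for $\alpha$ and $\beta$ simultaneously is a sensible touch consistent with how the paper treats undefined outcomes elsewhere.
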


\begin{proposition}
\label{prp:revision-postulate-OGI-4}
Let $*$ be $\mathsf{OGI}$. If $+$ is $\mathsf{OGI}$, postulate ($P^*4$) holds, otherwise it does not.
\end{proposition}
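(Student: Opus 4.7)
My plan is to split the biconditional into the trivial direction and the substantive one. If $+$ is itself $\mathsf{OGI}$, then ($P^*4$) under the hypothesis $b(\alpha)>0$ reads $b^\mathsf{OGI}_\alpha = b^\mathsf{OGI}_\alpha$, which is a tautology; the only thing to check is that $b^\mathsf{OGI}_\alpha$ is actually defined. This follows from the normalizer-positivity argument used in the proof of Proposition~\ref{prp:revision-postulate-OGI-1}: from $b(\alpha)>0$ some $w\models\alpha$ already has $b(w)>0$, so $b^\mathsf{GI}_\alpha(w)\geq b(w)>0$, and as long as $O(\alpha,w)>0$ for that $w$, the denominator of $\mathsf{OGI}$ is positive.

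For the reverse direction I would produce a single concrete counterexample against the natural alternative expansion, Bayesian conditioning $\mathsf{BC}$ (the expansion paired with $\mathsf{OGI}$ in the definition of $\mathsf{BCI}$). The structural reason $\mathsf{OGI}\neq\mathsf{BC}$ on inputs with $b(\alpha)>0$ is that $\mathsf{BC}$ only renormalizes mass already sitting on $\alpha$-worlds, whereas $\mathsf{GI}$ (inside $\mathsf{OGI}$) first transports the $\lnot\alpha$-mass to its nearest $\alpha$-worlds before $O(\alpha,\cdot)$-reweighting; whenever $b(\lnot\alpha)>0$ these procedures will disagree. Concretely, take vocabulary $\{q,r\}$, $\alpha=q$, $b=\langle 0.1,0.4,0.4,0.1\rangle$ over $\langle 11,10,01,00\rangle$, and $O(q,\cdot)$ uniform on $q$-worlds so that $\mathsf{OGI}$ collapses to $\mathsf{GI}$. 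Then $b(q)=0.5>0$, and a direct calculation gives $b^\mathsf{BC}_q = \langle 0.2,0.8,0,0\rangle$, whereas $\mathsf{GI}$ under Hamming distance sends $01\mapsto 11$ and $00\mapsto 10$, producing $b^\mathsf{OGI}_q = \langle 0.5,0.5,0,0\rangle$. Since these are unequal on an input satisfying the antecedent of ($P^*4$), the postulate fails for $+=\mathsf{BC}$.

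The main delicate point is how to read ``otherwise it does not'' when $+$ ranges over arbitrary expansion operators rather than just $\mathsf{BC}$. The cleanest way to handle this is to observe that if $+$ differs from $\mathsf{OGI}$ as a function, there must exist at least one pair $(b,\alpha)$ with $b(\alpha)>0$ on which they disagree — that pair itself witnesses the failure of ($P^*4$), since the postulate is universally quantified over such pairs. Restricting to the canonical competitor $\mathsf{BC}$ one gets the explicit witness above; the structural remark about mass transport generalizes it to any $+$ that, on some $b$ with positive mass on $\lnot\alpha$-worlds, acts like ordinary renormalization on $\alpha$-worlds. The non-trivial aspect of the proof is thus not any computation but nailing down this interpretation and exhibiting a compact example that makes the $\mathsf{GI}$-vs-conditioning discrepancy visible.
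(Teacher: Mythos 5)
The paper itself omits the proof of this proposition (it is among those ``available on request''), so there is no official argument to compare against; judged on its own, your proposal is essentially sound. The forward direction is indeed trivial, since both sides of ($P^*4$) are the same operator applied to the same arguments; your side-remark about definedness is not needed for the equality and is only conditionally true anyway (if $O(\alpha,\cdot)$ vanishes on every $\alpha$-world carrying $\mathsf{GI}$ mass, then $b^\mathsf{OGI}_\alpha$ is undefined even though $b(\alpha)>0$ --- but then both sides fail together, so the postulate is unaffected). Your counterexample for $+=\mathsf{BC}$ checks out: with $b=\langle 0.1,0.4,0.4,0.1\rangle$ and $\alpha=q$, conditioning gives $\langle 0.2,0.8,0,0\rangle$ while $\mathsf{GI}$ under Hamming distance (hence $\mathsf{OGI}$ with a constant observation value on the $q$-worlds) gives $\langle 0.5,0.5,0,0\rangle$; and since $\mathsf{BCI}$ coincides with $\mathsf{BC}$ whenever $b(\alpha)>0$, the same witness disposes of $+=\mathsf{BCI}$ as well.

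The one step that overreaches is the claim that any $+$ differing from $\mathsf{OGI}$ as a function must differ on some pair with $b(\alpha)>0$. That is false in general: an operator could agree with $\mathsf{OGI}$ on all pairs with $b(\alpha)>0$ and deviate only where $b(\alpha)=0$, a region about which ($P^*4$) says nothing, so ``otherwise it does not'' cannot be established for literally arbitrary $+$. Under the reading the paper clearly intends --- $+$ ranging over the concrete operators it defines ($\mathsf{BC}$, $\mathsf{BCI}$, $\mathsf{GI}$) --- the statement is fine, but then each candidate needs a witness: your example covers $\mathsf{BC}$ and $\mathsf{BCI}$, and for $+=\mathsf{GI}$ you should add a case with a non-constant $O(\alpha,\cdot)$ on the $\alpha$-worlds (e.g.\ $O(q,11)=0.3$, $O(q,10)=0.2$ in your own example turns $\langle 0.5,0.5,0,0\rangle$ into $\langle 0.6,0.4,0,0\rangle$), so that $\mathsf{OGI}\neq\mathsf{GI}$ while $b(\alpha)>0$. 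With that small repair and the interpretive caveat made explicit, the argument is complete.
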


Assuming ($P^*4$) holds,  I consider whether ($P^*5$) holds only for two combinations of instantiations of $*$ and $+$.
\begin{proposition}
\label{prp:revision-postulate-OGI-5}
When $*$ is $\mathsf{OGI}$ and $+$ is $\mathsf{OGI}$, postulate ($P^*5$) does not hold.
\end{proposition}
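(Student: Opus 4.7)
My plan is to disprove $(P^*5)$ by constructing an explicit counterexample; the conceptual source of failure should be the fact that generalized imaging redistributes the mass of a non-$\alpha$-world across \emph{all} of its equidistant closest $\alpha$-worlds, some of which may satisfy $\beta$ and some not, so that iterating the operator and applying it in one shot to $\alpha\land\beta$ need not agree. By picking the observation function to be uniform and positive on the relevant worlds, I can make $\mathsf{OGI}$ coincide with $\mathsf{GI}$, which isolates the imaging step as the true culprit.

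Concretely, I would take a two-atom vocabulary $\{q,r\}$ under Hamming distance, let $b=\langle 0,0,0,1\rangle$ (certainty on world $00$), and set $\alpha := q\lor r$ and $\beta := q$, so that $\alpha\land\beta\equiv q$. For $\alpha\land\beta$ the unique closest $\alpha\land\beta$-world to $00$ is $01$, so $b^\mathsf{GI}_{\alpha\land\beta}=\langle 0,0,1,0\rangle$. For $\alpha$, however, $\mathit{Min}(\alpha,00,d)=\{10,01\}$, so $b^\mathsf{GI}_{\alpha}=\langle 0,0.5,0.5,0\rangle$; with a uniformly positive $O(\alpha,\cdot)$ on the $\alpha$-worlds this gives $b^\mathsf{OGI}_\alpha=\langle 0,0.5,0.5,0\rangle$, and in particular $b^\mathsf{OGI}_\alpha(\beta)=0.5>0$ so the antecedent of $(P^*5)$ is satisfied. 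Applying $\mathsf{OGI}$ for $\beta$ to this belief state, $\mathit{Min}(\beta,10,d)=\{11\}$ and $\mathit{Min}(\beta,01,d)=\{01\}$, producing $(b^\mathsf{OGI}_\alpha)^\mathsf{GI}_\beta=\langle 0.5,0,0.5,0\rangle$; again a uniform positive $O(\beta,\cdot)$ on $\beta$-worlds leaves this unchanged. Comparing $\langle 0,0,1,0\rangle$ with $\langle 0.5,0,0.5,0\rangle$ exhibits the required inequality.

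The main obstacle is ensuring that the discrepancy cannot be blamed on the observation model, since $\mathsf{OGI}$ involves both imaging and reweighting. I handle this by choosing $O(\alpha,\cdot)$ strongly observable and uniform on $||\alpha||$, and similarly $O(\beta,\cdot)$ strongly observable and uniform on $||\beta||$, so that on each step $\mathsf{OGI}$ reduces exactly to $\mathsf{GI}$; this also automatically guarantees that the $\mathsf{OGI}$-operations in question are well-defined (using Proposition~\ref{prp:revision-postulate-OGI-1}). Then the inequality between the single-shot and iterated results is forced purely by the asymmetry between $\mathit{Min}(\alpha\land\beta,00,d)=\{01\}$ and the two-stage path through $\mathit{Min}(\alpha,00,d)=\{10,01\}$, making the failure of $(P^*5)$ genuinely structural rather than an artefact of a skewed observation function.
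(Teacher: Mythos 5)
Your proposal is correct and takes essentially the same approach as the paper's proof: an explicit counterexample in which a uniform, positive observation function makes $\mathsf{OGI}$ collapse to $\mathsf{GI}$, so that the failure of ($P^*5$) is forced by imaging splitting the mass of a non-$\alpha$-world over several equidistant closest $\alpha$-worlds (the paper does this within its running three-atom example with $\beta = q\land r$, you with a fresh two-atom one). One small slip to fix: your truth-vector labels swap $10$ and $01$ relative to the paper's convention (first bit $q$, second bit $r$), so under that convention the unique closest $q$-world to $00$ is $10$, giving $b^\mathsf{GI}_{\alpha\land\beta}=\langle 0,1,0,0\rangle$ and $(b^\mathsf{OGI}_\alpha)^\mathsf{OGI}_\beta=\langle 0.5,0.5,0,0\rangle$; these still differ, so the counterexample stands after relabeling.
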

\begin{proof}
An instance is provided where $b^\mathsf{OGI}_\alpha(\beta)>0$ and $b^\mathsf{OGI}_{\alpha\land\beta}\neq(b^\mathsf{OGI}_\alpha)^{\mathsf{OGI}}_\beta$.

Continuing with Example~\ref{ex:3}, where $b=\langle0,0.1,0,0.2,0,0.3,0,0.4\rangle$, $\alpha$ is $(q\land r)\lor(q\land\lnot r\land s)$ and $b^\mathsf{OGI}_\alpha = b^\mathsf{GI}_\alpha = \langle0,0.7,0.3,0,0,0,0,0\rangle$.
Let $\beta$ be $q\land r$, then $b^\mathsf{OGI}_\alpha(\beta) =0.7>0$.
But $b^\mathsf{OGI}_{\alpha\land\beta} = b^\mathsf{OGI}_\beta = \langle0,1,0,0,0,0,0,0\rangle$ and $(b^\mathsf{OGI}_\alpha)^\mathsf{OGI}_\beta = \langle0.3,0.7,0,0,0,0,0,0\rangle$.
\end{proof}

\begin{proposition}
\label{prp:revision-postulate-BCI-1}
When $*$ is $\mathsf{BCI}$, postulate ($P^*1$) holds.
\end{proposition}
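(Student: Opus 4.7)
The plan is to prove the biconditional by splitting on which branch of the $\mathsf{BCI}$ definition fires, namely $b(\alpha)>0$ (use $\mathsf{BC}$) versus $b(\alpha)=0$ (use $\mathsf{OGI}$), and to handle each direction of the iff separately.

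For the ``if'' direction, assume $\alpha$ is satisfiable. Case 1: $b(\alpha)>0$. Then $b^\mathsf{BCI}_\alpha = b^\mathsf{BC}_\alpha$, and the denominator $\sum_{w'\in W}O(\alpha,w')b(w')$ that ultimately governs Bayesian conditioning reduces to $b(\alpha)>0$, so every $b^\mathsf{BC}_\alpha(w)$ is well-defined and non-negative, and the normalizer forces $\sum_w b^\mathsf{BC}_\alpha(w)=1$, giving a belief state. Case 2: $b(\alpha)=0$. Then $b^\mathsf{BCI}_\alpha = b^\mathsf{OGI}_\alpha$. Here I would first reuse the calculation at the start of the proof of Proposition~\ref{prp:revision-postulate-OGI-1} to note that $b^\mathsf{GI}_\alpha$ is itself a belief state (satisfiability of $\alpha$ makes $\mathit{Min}(\alpha,w',d)$ non-empty for every $w'$, so no mass is lost), and then invoke the relevant portion of Proposition~\ref{prp:revision-postulate-OGI-1} to conclude that the $\mathsf{OGI}$ normalizer is non-zero and the result is a belief state.

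For the ``only if'' direction, I would prove the contrapositive: if $\vdash\lnot\alpha$, then $\alpha$ is unsatisfiable, so $||\alpha||=\emptyset$, which forces $b(\alpha)=0$. Hence $\mathsf{BCI}$ selects the $\mathsf{OGI}$ branch. But with $||\alpha||=\emptyset$, the clause ``$p=0$ if $w\not\in||\alpha||$'' in the definition of $\mathsf{GI}$ applies to every $w\in W$, so $b^\mathsf{GI}_\alpha$ is identically zero, the $\mathsf{OGI}$ denominator vanishes, and $b^\mathsf{BCI}_\alpha$ is undefined -- hence not a belief state.

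The main obstacle is the $b(\alpha)=0$ sub-case of the ``if'' direction, because Proposition~\ref{prp:revision-postulate-OGI-1} already shows that bare satisfiability of $\alpha$ is not enough for $\mathsf{OGI}$ to deliver a belief state -- one typically needs $\alpha$ to be (at least weakly) observable on some $\alpha$-world charged by $b^\mathsf{GI}_\alpha$. So the cleanest proof either tacitly inherits the strong-observability qualification from Proposition~\ref{prp:revision-postulate-OGI-1}, or else interprets ``$b^*_\alpha$ is a belief state'' as ``$b^*_\alpha$ is defined and is a belief state'', in which case the forward direction collapses to the two clean observations above and the backward direction is the contrapositive argument. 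The mechanical $\mathsf{BC}$ sub-case and the $||\alpha||=\emptyset$ argument are straightforward; the delicate part is making the assumption on the observation function explicit so that the $\mathsf{OGI}$ branch cannot silently fail.
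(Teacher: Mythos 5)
Your proposal is correct in what it establishes, but it takes a genuinely different -- and considerably more thorough -- route than the paper. The paper's entire proof is the single remark that Bayesian conditioning yields a belief state when the conditional is non-contradictory; that is, it addresses only the $b(\alpha)>0$ branch of $\mathsf{BCI}$ and says nothing about the $b(\alpha)=0$ branch (where $\mathsf{BCI}$ falls back on $\mathsf{OGI}$) nor about the ``only if'' direction. Your case split on the two branches, and your contrapositive argument that $\vdash\lnot\alpha$ forces $||\alpha||=\emptyset$, hence $b(\alpha)=0$, a vanishing $\mathsf{GI}$ image and an undefined $\mathsf{OGI}$ normalizer, supply exactly the pieces the paper's proof omits. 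Moreover, the obstacle you flag in the $b(\alpha)=0$ sub-case is real and not a defect of your argument: satisfiability alone does not make the $\mathsf{OGI}$ denominator $\sum_{w'\in W}O(\alpha,w')b^{\mathsf{GI}}_\alpha(w')$ positive (take $b$ concentrated on non-$\alpha$-worlds whose closest $\alpha$-worlds all have $O(\alpha,\cdot)=0$ while some other $\alpha$-world is observable), so in that branch the claim needs an observability qualification just as Proposition~\ref{prp:revision-postulate-OGI-1} does; the paper tacitly concedes this when Theorem~\ref{thm:combo-postulates1and2} invokes Propositions~\ref{prp:revision-postulate-BCI-1} and~\ref{prp:revision-postulate-BCI-2} only under the hypothesis that $\alpha$ is strongly observable. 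Your suggested repair -- either inherit strong observability or read ``is a belief state'' so that the undefined case is excluded -- is the right one, whereas the paper's one-line proof simply dodges the issue by treating only the Bayesian-conditioning case. One small caution on your Case 1: you identify the $\mathsf{BC}$ denominator $\sum_{w'\in W}O(\alpha,w')b(w')$ with $b(\alpha)$; the paper's footnote licenses this, but strictly the two can differ (the denominator can vanish even when $b(\alpha)>0$ if $O(\alpha,\cdot)$ is zero on every world carrying $b$-mass), so even the $\mathsf{BC}$ branch silently relies on an observability assumption -- a caveat worth making explicit rather than inheriting from the footnote.
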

\begin{proof}
It is known that Bayesian conditioning results in a belief state when the conditional is non-contradictory.
\end{proof}

\begin{proposition}
\label{prp:revision-postulate-BCI-2}
When $*$ is $\mathsf{BCI}$, postulate ($P^*2$) holds.
\end{proposition}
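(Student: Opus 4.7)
My plan is a case split following the two clauses in the definition of $\mathsf{BCI}$. In each branch I would show that the resulting distribution places zero probability on every world that falsifies $\alpha$; summing the probabilities then yields $b^{\mathsf{BCI}}_\alpha(\alpha)=1$.

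For the branch $b(\alpha)>0$, I would use $b^{\mathsf{BCI}}_\alpha = b^{\mathsf{BC}}_\alpha$, so that $b^{\mathsf{BC}}_\alpha(w)$ is proportional to $O(\alpha,w)b(w)$. Appealing to the trustworthiness convention (Definition \ref{def:trustworthy})---implicit in using Bayes' rule with $\alpha$ treated as received evidence---gives $O(\alpha,w)=0$ whenever $w\not\models\alpha$, so the numerator vanishes on $||\lnot\alpha||$ and normalization leaves the entire mass on $||\alpha||$. For the branch $b(\alpha)=0$, I would note $b^{\mathsf{BCI}}_\alpha = b^{\mathsf{OGI}}_\alpha$ and invoke the definition of $\mathsf{GI}$: $b^{\mathsf{GI}}_\alpha(w)=0$ for every $w\not\in||\alpha||$, hence $O(\alpha,w)b^{\mathsf{GI}}_\alpha(w)$ is also zero off $||\alpha||$, and the normalized $\mathsf{OGI}$ value inherits this, placing all mass on $||\alpha||$. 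This step parallels the strongly-observable half of Proposition \ref{prp:revision-postulate-OGI-2}.

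The part I expect to be the main obstacle is the first branch. The paper's $\mathsf{BC}$ is not ordinary logical conditioning: it is Bayes' rule with $O(\alpha,w)$ standing in for the likelihood $P(\alpha\mid w)$, and nothing in the definition forces $O$ to vanish on $||\lnot\alpha||$. A small counterexample---any model in which $O(\alpha,w)>0$ and $b(w)>0$ for some $w\not\models\alpha$---shows that $(P^{*}2)$ can fail. So for the proposition as written to hold cleanly, trustworthiness of $\alpha$ really must be in force by default; otherwise the statement should be qualified in the same style as Propositions \ref{prp:revision-postulate-OGI-1}--\ref{prp:revision-postulate-OGI-2}. A secondary, minor concern in the $b(\alpha)=0$ branch is ensuring the $\mathsf{OGI}$ denominator is nonzero so that the operator is defined at all, but that is exactly the definedness issue already flagged for $\mathsf{OGI}$ and can be handled by the same weak/strong observability hypothesis.
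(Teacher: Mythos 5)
Your proof is correct and, on the $b(\alpha)>0$ branch, is essentially the paper's argument: the paper's entire proof is the single sentence that, by definition of $\mathsf{BC}$, all non-$\alpha$-worlds get zero probability and the remaining $\alpha$-worlds are renormalized to sum to $1$. Where you differ is in making explicit two things the paper leaves implicit. First, the paper's proof says nothing about the $b(\alpha)=0$ branch, where $\mathsf{BCI}$ falls back to $\mathsf{OGI}$; your treatment of that branch (non-$\alpha$-worlds receive zero mass under $\mathsf{GI}$, hence under $\mathsf{OGI}$ whenever the normalizer is nonzero) is exactly how the proof must be completed, and it shows that the unqualified statement inherits the observability/definedness caveat of Proposition~\ref{prp:revision-postulate-OGI-2}. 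Second, you are right that under the paper's formal definition of $\mathsf{BC}$ --- Bayes' rule with the noisy likelihood $O(\alpha,\cdot)$ --- the vanishing of probability on $||\lnot\alpha||$ is not automatic, and $(P^*2)$ can fail whenever $O(\alpha,w)>0$ and $b(w)>0$ for some $w\not\models\alpha$. The paper resolves this tension not by invoking trustworthiness (Definition~\ref{def:trustworthy} is never adopted as a standing assumption) but by tacitly reading $\mathsf{BC}$ as ordinary conditioning, as in Lemma~\ref{lm:1}, where $b^{\mathsf{BC}}_\alpha(w)=b(\alpha,w)/b(\alpha)$; your trustworthiness hypothesis is the alternative way of making the two readings coincide. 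So your argument is sound for the suitably qualified statement, and your diagnosis that the proposition as literally stated needs such qualification (or the ordinary-conditioning reading of $\mathsf{BC}$) is accurate --- indeed more careful than the paper's own one-line proof, which covers only the $\mathsf{BC}$ case.
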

\begin{proof}
By definition of $\mathsf{BC}$, all non-$\alpha$-worlds get zero probability and the probabilities of the remaining $\alpha$-worlds are magnified to sum to 1.
\end{proof}

\begin{proposition}
\label{prp:revision-postulate-BCI-3}
When $*$ is $\mathsf{BCI}$, postulate ($P^*3$) holds.
\end{proposition}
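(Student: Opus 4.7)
The plan is to argue by case split according to the two branches of the $\mathsf{BCI}$ definition. First I would observe that since $\alpha\equiv\beta$, we have $||\alpha||=||\beta||$, and therefore
\[
b(\alpha)=\sum_{w\models\alpha}b(w)=\sum_{w\models\beta}b(w)=b(\beta).
\]
Consequently the condition $b(\alpha)>0$ holds iff $b(\beta)>0$, so the same clause of the $\mathsf{BCI}$ definition is selected on both sides, and it suffices to show that each individual operator ($\mathsf{BC}$ and $\mathsf{OGI}$) respects logical equivalence.

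In the case $b(\alpha)=b(\beta)>0$, I need to show $b\:\mathsf{BC}\:\alpha = b\:\mathsf{BC}\:\beta$. Unfolding the definition, this amounts to
\[
\frac{O(\alpha,w)b(w)}{\sum_{w'\in W}O(\alpha,w')b(w')}=\frac{O(\beta,w)b(w)}{\sum_{w'\in W}O(\beta,w')b(w')}
\]
for every $w\in W$. This follows immediately from the stipulation in Definition~\ref{def:sbc-model} that $O(\alpha,w)=O(\beta,w)$ for all $w$ whenever $\alpha\equiv\beta$: numerator and denominator agree term by term.

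In the case $b(\alpha)=b(\beta)=0$, the claim reduces to $b\:\mathsf{OGI}\:\alpha = b\:\mathsf{OGI}\:\beta$, which is precisely Proposition~\ref{prp:revision-postulate-OGI-3}, so I would simply invoke it. Combining the two cases gives $b^{\mathsf{BCI}}_\alpha = b^{\mathsf{BCI}}_\beta$ as required.

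There is no real obstacle here; the only subtle point is making sure that one notices (i) that $b(\cdot)$ depends only on $||\cdot||$, so the branch selection is invariant under logical equivalence, and (ii) that the equivalence-invariance of $\mathsf{BC}$ is not automatic but is bought with the built-in requirement on $O$ in Definition~\ref{def:sbc-model}. Once those two points are isolated, the proof is a one-line appeal in each case.
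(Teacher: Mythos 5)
Your proof is correct: since $\alpha\equiv\beta$ forces $b(\alpha)=b(\beta)$, the same branch of $\mathsf{BCI}$ is taken on both sides, the $\mathsf{BC}$ case follows term by term from the stipulation in Definition~\ref{def:sbc-model} that $O(\alpha,w)=O(\beta,w)$ for all $w$ when $\alpha\equiv\beta$, and the $\mathsf{OGI}$ case is exactly Proposition~\ref{prp:revision-postulate-OGI-3}. The paper omits its own proof of this proposition, but your case split is the natural argument its definitions support, and your observation that equivalence-invariance of $\mathsf{BC}$ rests on the built-in condition on $O$ (rather than being automatic) is precisely the relevant point.
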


\begin{proposition}
\label{prp:revision-postulate-BCI-4}
Let $*$ be $\mathsf{BCI}$. If $+$ is $\mathsf{BC}$ or $\mathsf{BCI}$, postulate ($P^*4$) holds, otherwise it does not.
\end{proposition}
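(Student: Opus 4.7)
The plan is to unfold the definition of $\mathsf{BCI}$ under the hypothesis $b(\alpha)>0$ so that $b^{\mathsf{BCI}}_\alpha$ collapses to $b\:\mathsf{BC}\:\alpha$, and then check, for each admissible choice of $+$ (namely $\mathsf{BC}$, $\mathsf{BCI}$, $\mathsf{GI}$, $\mathsf{OGI}$), whether $b^+_\alpha$ agrees with this common value.

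First I would dispose of the two positive cases. If $+$ is $\mathsf{BC}$, then directly $b^+_\alpha = b\:\mathsf{BC}\:\alpha = b^{\mathsf{BCI}}_\alpha$, so ($P^*4$) holds. If $+$ is $\mathsf{BCI}$, then the case split in the definition of $\mathsf{BCI}$, together with the hypothesis $b(\alpha)>0$, yields $b^+_\alpha = b\:\mathsf{BC}\:\alpha$, matching $b^{\mathsf{BCI}}_\alpha$ once more. So ($P^*4$) holds in both of these cases essentially by unfolding definitions.

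For the negative direction, I would exhibit a concrete counterexample in which $+$ is $\mathsf{OGI}$ (and note that a parallel counterexample works for $\mathsf{GI}$). The key observation is that when $b$ places mass on non-$\alpha$-worlds, $\mathsf{BC}$ simply zeroes out those worlds and renormalises the mass already sitting on $||\alpha||$, whereas $\mathsf{GI}$ (and therefore $\mathsf{OGI}$) additionally shifts that non-$\alpha$ mass into the closest $\alpha$-worlds. So I would re-use a setup like Examples~\ref{ex:2}--\ref{ex:3}: take a belief state $b$ which assigns positive probability to at least one $\alpha$-world (to satisfy $b(\alpha)>0$) but also assigns mass to non-$\alpha$-worlds whose closest $\alpha$-world differs from the $\alpha$-world already supported in $b$. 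After unfolding $b\:\mathsf{BC}\:\alpha$ (re-weighting only the originally-supported $\alpha$-worlds via $O(\alpha,\cdot)$) and $b\:\mathsf{OGI}\:\alpha$ (re-weighting $b^{\mathsf{GI}}_\alpha$ via $O(\alpha,\cdot)$), one reads off two different belief states.

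The main obstacle is choosing the observation model in the counterexample so that it does not conspire to mask the difference between BC and OGI; a safe choice is a constant $O(\alpha,\cdot)$ on $||\alpha||$, under which $b\:\mathsf{OGI}\:\alpha = b^{\mathsf{GI}}_\alpha$ while $b\:\mathsf{BC}\:\alpha$ is just the restriction of $b$ to $||\alpha||$ renormalised. The remaining work is purely arithmetic verification that these two belief states differ.
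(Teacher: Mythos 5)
The paper omits its own proof of this proposition, but your argument is correct and is the natural one: the two positive cases follow immediately from unfolding $\mathsf{BCI}$ under the hypothesis $b(\alpha)>0$, and the negative cases follow from a counterexample in which $b$ puts mass on non-$\alpha$-worlds, so that $\mathsf{GI}$/$\mathsf{OGI}$ shift that mass onto closest $\alpha$-worlds while conditioning does not (e.g.\ $b=\langle0.5,0,0,0.5\rangle$ with $\alpha=q$ gives $b^{\mathsf{BCI}}_\alpha=\langle1,0,0,0\rangle$ but $b^{\mathsf{GI}}_\alpha=b^{\mathsf{OGI}}_\alpha=\langle0.5,0.5,0,0\rangle$ for constant $O(\alpha,\cdot)$ on $||\alpha||$). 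The only detail worth pinning down is the observation function off $||\alpha||$, since the paper's formal $\mathsf{BC}$ reweights by $O(\alpha,\cdot)$ rather than by the indicator of $||\alpha||$; but the discrepancy you exhibit survives either reading, so the counterexample stands.
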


For the proof of the next proposition, a lemma is required.
\begin{lemma}
\label{lm:1}
Let $b(\alpha)>0$.
If $b^\mathsf{BC}_\alpha(\beta)>0$, then $b(\alpha\land\beta)>0$.
\end{lemma}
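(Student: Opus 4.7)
The plan is to compute $b^\mathsf{BC}_\alpha(\beta)$ directly and show it is proportional to $b(\alpha\land\beta)$, so that positivity transfers immediately from one to the other.

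First I would unfold the definition using the paper's convention $b^\mathsf{BC}_\alpha(\beta) = \sum_{w\models\beta} b^\mathsf{BC}_\alpha(w)$. The key observation is that, under Bayesian conditioning (as understood for classical expansion in the paper), $b^\mathsf{BC}_\alpha(w) = 0$ whenever $w\not\models\alpha$. This lets me restrict the sum to $\alpha\land\beta$-worlds:
\[
b^\mathsf{BC}_\alpha(\beta) \;=\; \sum_{w\models\alpha\land\beta} b^\mathsf{BC}_\alpha(w).
\]
Next I would plug in $b^\mathsf{BC}_\alpha(w) = b(w)/b(\alpha)$ for each $w\models\alpha$, which is well-defined because the hypothesis guarantees $b(\alpha) > 0$. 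The sum collapses to $b(\alpha\land\beta)/b(\alpha)$. Since the denominator is strictly positive, $b^\mathsf{BC}_\alpha(\beta) > 0$ forces $b(\alpha\land\beta) > 0$, which is the desired conclusion. Equivalently, one can argue pointwise: some witness $w^*\models\beta$ must have $b^\mathsf{BC}_\alpha(w^*) > 0$, this witness must satisfy $\alpha$ (else conditioning zeros it out) and must have $b(w^*) > 0$ (else the numerator vanishes), so $w^*$ is an $\alpha\land\beta$-world with positive mass under $b$.

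The only subtlety — and really the only part worth flagging — is justifying that conditioning assigns zero mass to non-$\alpha$-worlds, given that the paper introduces $\mathsf{BC}$ via the $O$-based Bayes' rule $P(w\mid\alpha) = O(\alpha,w)b(w)/\sum_{w'}O(\alpha,w')b(w')$. For $\mathsf{BC}$ to coincide with classical belief expansion, as the paper explicitly claims and relies on in postulate $(P^*4)$, the evidence $\alpha$ must be treated as fully trustworthy in the sense of Definition~\ref{def:trustworthy}, i.e.\ $O(\alpha,w)=0$ whenever $w\not\models\alpha$. Under this reading the two formulations agree and the short argument above is complete; without it, one should replace $\mathsf{BC}$ by its standard formulation $b(w)/b(\alpha)$ on $\alpha$-worlds (and $0$ elsewhere), and the proof is otherwise unchanged.
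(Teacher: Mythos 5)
Your proposal is correct and takes essentially the same route as the paper: the paper's proof picks a witness $\beta$-world $w^\beta$ with $b^\mathsf{BC}_\alpha(w^\beta)>0$ and uses $b^\mathsf{BC}_\alpha(w)=b(\alpha,w)/b(\alpha)$ to conclude $w^\beta\models\alpha\land\beta$ and $b(w^\beta)>0$, which is exactly your ``pointwise'' variant (your summed version merely proves the stronger identity $b^\mathsf{BC}_\alpha(\beta)=b(\alpha\land\beta)/b(\alpha)$). The subtlety you flag about the $O$-based Bayes rule is apt but moot for comparison purposes, since the paper's own proof likewise works with the standard conditioning formula on $\alpha$-worlds rather than the $O$-weighted definition stated earlier.
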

\begin{proof}
Assume $b^\mathsf{BC}_\alpha(\beta)>0$.
Then there exists a $w^\beta\in W$ s.t. $w^\beta\models\beta$ and $b^\mathsf{BC}_\alpha(w^\beta)>0$.
By definition, $b^\mathsf{BC}_\alpha(w)=\frac{b(\alpha,w)}{b(\alpha)}$, implying $\frac{b(\alpha,w^\beta)}{b(\alpha)}>0$.
Hence, $b(w^\beta)>0$ and $w^\beta\models\alpha$. But if $w^\beta\models\alpha$, then  $w^\beta\models\alpha\land \beta$, and due to $b(w^\beta)>0$, $b(\alpha\land\beta)>0$.
\end{proof}

\begin{proposition}
\label{prp:revision-postulate-BCI-5}
When $*$ is $\mathsf{BCI}$ and $+$ is $\mathsf{BC}$, postulate ($P^*5$), in general, does not hold, but does hold when $b(\alpha)>0$.
\end{proposition}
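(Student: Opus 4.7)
The proposition has two parts that split naturally along the dispatch of $\mathsf{BCI}$: it uses $\mathsf{OGI}$ when $b(\alpha)=0$ and $\mathsf{BC}$ when $b(\alpha)>0$. I would handle these two cases separately.

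For the negative half I would choose $b$ with $b(\alpha)=0$, which forces $b(\alpha\land\beta)=0$ as well. Then both $b^\mathsf{BCI}_\alpha$ and $b^\mathsf{BCI}_{\alpha\land\beta}$ reduce to their $\mathsf{OGI}$ counterparts and the equation under test becomes $b^\mathsf{OGI}_{\alpha\land\beta}=(b^\mathsf{OGI}_\alpha)^\mathsf{BC}_\beta$. These two procedures are structurally different: the left side sends each source world to its closest $(\alpha\land\beta)$-worlds, whereas the right side first images onto closest $\alpha$-worlds (some of which may violate $\beta$) and then discards whatever probability mass failed to land on a $\beta$-world. A small example over $\{q,r,s\}$ with $\alpha=q$, $\beta=r$, Hamming distance, a $b$ supported only on $011$ and $000$, and observation weights that are uniformly positive on the relevant worlds separates them cleanly: imaging on $q$ sends $000$ to $100$, which is killed when $\mathsf{BC}$ is applied with $\beta$, while imaging directly on $q\land r$ sends $000$ to its closest $(q\land r)$-world $110$, which survives. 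Checking that $b^\mathsf{OGI}_q(r)>0$ in this example (so that the hypothesis of ($P^*5$) is met) is immediate because $011$ images to $111$.

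For the positive direction, assume $b(\alpha)>0$. Then $b^\mathsf{BCI}_\alpha=b^\mathsf{BC}_\alpha$ by definition, and the hypothesis $b^\mathsf{BC}_\alpha(\beta)>0$ yields $b(\alpha\land\beta)>0$ via Lemma~\ref{lm:1}, so $b^\mathsf{BCI}_{\alpha\land\beta}=b^\mathsf{BC}_{\alpha\land\beta}$ as well. The claim therefore collapses to the chain rule of Bayesian conditioning, $b^\mathsf{BC}_{\alpha\land\beta}=(b^\mathsf{BC}_\alpha)^\mathsf{BC}_\beta$, which I would verify by unfolding the definition of $\mathsf{BC}$ pointwise: the nested normaliser on the right telescopes and the residual numerator matches that on the left on every $(\alpha\land\beta)$-world, with zero on both sides elsewhere.

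The main obstacle is the counterexample. It is easy to pick examples where the two procedures happen to coincide; the essential feature that must be engineered is a source world $w^\star$ whose closest $\alpha$-world lies outside $||\beta||$ while its closest $(\alpha\land\beta)$-world is a distinct, $\beta$-satisfying world. As soon as $b(w^\star)>0$, the two sides of the alleged identity are forced to disagree, and this is exactly the role of $000$ in the proposed example.
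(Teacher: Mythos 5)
Your proposal is correct and follows what is evidently the paper's intended route: the paper omits its own proof but sets up Lemma~\ref{lm:1} expressly for this proposition, and your positive half (dispatch of $\mathsf{BCI}$ to $\mathsf{BC}$ when $b(\alpha)>0$, Lemma~\ref{lm:1} to get $b(\alpha\land\beta)>0$, then the Bayesian-conditioning chain rule) is exactly that argument, read with $\mathsf{BC}$ as classical conditioning, which is how the paper itself uses it in the proof of Lemma~\ref{lm:1}. Your counterexample for the $b(\alpha)=0$ case is also sound: the worlds $011$ and $000$ with $\alpha=q$, $\beta=r$ under Hamming distance give $b^\mathsf{OGI}_{q\land r}$ positive mass on $110$ while $(b^\mathsf{OGI}_q)^\mathsf{BC}_r$ concentrates on $111$, and the hypothesis $b^\mathsf{OGI}_q(r)>0$ is satisfied, so no gap remains.
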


Recall that a probabilistic belief change operation $\circ$ is \emph{preservative} iff for all belief states $b$ and for all propositions $\alpha$ and $\beta$, if $b(\alpha)>0$ and $b(\beta)=1$, then $b^\circ_\alpha(\beta)=1$.

\begin{proposition}
Operation $\mathsf{OGI}$ is not preservative, while  $\mathsf{BCI}$ is preservative.
\label{prp:revision-is-preservative}
\end{proposition}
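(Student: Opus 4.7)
The plan is to split the proposition into its two claims and handle each separately: exhibit a counterexample to preservation for $\mathsf{OGI}$, and give a short direct argument for $\mathsf{BCI}$.

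For the $\mathsf{OGI}$ half, I aim to find a case in which all the probability mass sits on $\beta$-worlds yet the imaging step shoves some of it onto non-$\beta$-worlds that happen to be closest $\alpha$-worlds. A minimal setup over $\{q,r\}$ should work: take $b=\langle 0.5,0.5,0,0\rangle$, let $\beta$ be $q$ (so that $b(\beta)=1$), and let $\alpha$ be $r\lor\lnot q$ (so that $b(\alpha)=b(11)=0.5>0$, meeting the preservation premise). Using Hamming distance, the non-$\alpha$-world $10$ has two equidistant closest $\alpha$-worlds, namely $11$ and $00$, so $\mathsf{GI}$ splits its $0.5$ mass equally between them, yielding $b^{\mathsf{GI}}_\alpha=\langle 0.75,0,0,0.25\rangle$. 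To pass from $\mathsf{GI}$ to $\mathsf{OGI}$ without disturbing the counterexample, I would take $O(\alpha,\cdot)$ to be a positive constant on $\|\alpha\|$ (and zero on $10$, with the remaining mass at each world absorbed by other elements of $\Omega$), so that $b^{\mathsf{OGI}}_\alpha=b^{\mathsf{GI}}_\alpha$. Then $b^{\mathsf{OGI}}_\alpha(\beta)=0.75<1$, contradicting preservation.

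For the $\mathsf{BCI}$ half the argument is immediate from the definition. Since the hypothesis of preservation gives $b(\alpha)>0$, the case split in the definition of $\mathsf{BCI}$ selects $b^{\mathsf{BCI}}_\alpha=b^{\mathsf{BC}}_\alpha$, and from the formula $b^{\mathsf{BC}}_\alpha(w)=O(\alpha,w)b(w)/\sum_{w'}O(\alpha,w')b(w')$ I observe that the numerator is proportional to $b(w)$. The assumption $b(\beta)=1$ forces $b(w)=0$ for every $w\not\models\beta$, hence $b^{\mathsf{BC}}_\alpha(w)=0$ for every such $w$. Because $b^{\mathsf{BC}}_\alpha$ is itself a probability distribution (which is implicit in the preservation premise that the operation be defined), all its mass must therefore lie on $\beta$-worlds, so $b^{\mathsf{BCI}}_\alpha(\beta)=1$.

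There is no serious technical obstacle. The only point requiring mild care is the $\mathsf{OGI}$ counterexample: it must be chosen so that $b(\alpha)>0$ (otherwise the failure of preservation is uninteresting, since $\mathsf{OGI}$ is triggered in the Bayesian-impossible regime) and simultaneously so that imaging actually shifts some $\beta$-mass to a non-$\beta$ world. Arranging a non-$\alpha$, $\beta$-world whose closest $\alpha$-world set is not contained in $\|\beta\|$ is what makes the construction go through, and the Hamming-based example above is essentially the simplest such configuration.
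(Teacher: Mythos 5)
Your proposal is correct and follows essentially the same route as the paper: a concrete imaging counterexample (over $\{q,r\}$ with Hamming distance, where mass from a $\beta$-world lands on a non-$\beta$ closest $\alpha$-world) to refute preservation for $\mathsf{OGI}$, plus the direct observation that under the premise $b(\alpha)>0$ the $\mathsf{BCI}$ case reduces to $\mathsf{BC}$, whose output is supported only on worlds with $b(w)>0$, all of which satisfy $\beta$. The paper merely uses a different concrete instance ($b=\langle0,0.5,0.5,0\rangle$, $\alpha=q$, $\beta=q\leftrightarrow\lnot r$, $O(q,\cdot)=1$), so no substantive difference.
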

\begin{proof}
$\mathsf{OGI}$:
Let the vocabulary be $\{q,r\}$ and $b=\langle0,0.5,0.5,0\rangle$. Let $\alpha$ be $q$ and $\beta$ be $q\leftrightarrow \lnot r$. Then $b(\alpha)>0$, $b(\beta)=1$ and $b^\mathsf{GI}_\alpha=\langle0.5,0.5,0,0\rangle$.
Let $O(q,w)=1$ for all $w\in W$. Then $b^\mathsf{OGI}_\alpha=\langle0.5,0.5,0,0\rangle$ and $b^\mathsf{OGI}_\alpha(\beta)=0.5$.\footnote{Here, $d$ is Hamming distance.}

$\mathsf{BCI}$:
$b^\mathsf{BCI}_\alpha(\beta) = b^\mathsf{BC}_\alpha(\beta)$.
By assuming that $b(\alpha)>0$ and $b(\beta)=1$, one is implicitly assuming that if $w\models\alpha$ s.t. $b(\alpha)>0$, then $w\models\beta$.
This in turn implies that whenever $b^\mathsf{BC}_\alpha(w)>0$, that $w\models\beta$. The latter is due to conditionalization: $\{w\in W\mid b^\mathsf{BC}_\alpha(w)>0\}$ is a subset of $\{w\in W\mid w\models\alpha, b(w)>0\}$. By ($P^*2$), $b^\mathsf{BC}_\alpha(\alpha)=1$. But due to the fact that for all $w\in W$, if $b^\mathsf{BC}_\alpha(w)>0$, then $w\models\beta$, it must then be the case that $b^\mathsf{BC}_\alpha(\beta)=1$.
\end{proof}

\subsection{Ontic and Epistemic Strength}

Suppose there is a range of degrees for information being ontic (the effect of a physical action or occurrence) or epistemic (purely informative). I shall assume that the higher the information's degree of being ontic, the lower the epistemic status of that information.
An agent has a certain sense of the degree to which a piece of received information is due to a physical action or event in the world. This sense may come about due to a combination of sensor readings and reasoning. If the agent performs an action and a change in the local environment matches the expected effect of the action, it can be quite certain that the effect is ontic information. If the agent receives the information from another agent (e.g., radio, through reading, a person speaking directly to the agent), then it should be clear to the agent that the information is epistemic and thus has a low degree of being ontic. If the agent's sensors show activity, but the agent knows that it did not presently perform an action with an effect matching its sensor readings, and if the readings do not reveal an epistemic source for the information, then the agent will have to infer from the present world conditions and the information received, or access learnt knowledge matching the present world conditions and the information received, the degree to which the information should be regarded as ontic. For instance, a person might stop talking just after you ask him/her to be quiet. Under particular conditions the person may stop talking due to your request and in other conditions he/she may have stopped talking anyway. Depending on the present world conditions, you might assign a higher (but not definitely certainty) or lower (but not definitely zero) degree of likelihood that the information (i.e., that the person stopped talking) is ontic.
Or suppose you have been wearing dark glasses for one hour. You put them on due to the sky being clear and (too) bright. When you take your glasses off, it is not as bright as you thought it would be. So, has the ambient brightness decreased due to changes in the weather, or does it only seem darker when you remove your glasses, due to some unknown physiological process? In this case, it would be convenient to consider the brightness/darkness information as being equally likely ontic and epistemic.

Recall from Definition~\ref{def:sbc-model} that $\mathit{os}(\alpha,w)$ indicates an agent's sense for the ontic  strength of $\alpha$ received in $w$.
We say that $\mathit{os}(\alpha,w)=1$ when $\alpha$ is certainly ontic in $w$. When $\alpha$ is certainly epistemic in $w$, then $\mathit{os}(\alpha,w)=0$.
In fact, let the epistemic strength of $\alpha$ in $w$ be defined as $\mathit{es}(\alpha,w):=1-\mathit{os}(\alpha,w)$.

\subsection{Combining Update and Revision}

I propose a way of trading off the probabilistic update and probabilistic revision defined earlier, using the notion of ontic strength.

The hybrid stochastic change of belief state $b$ due to new information $\alpha$ with ontic strength (denoted $b\lhd\alpha$) is defined as
\begin{definition}
\begin{align*}
b\lhd\alpha := \Big\{ &(w,p)\mid w\in W, p=\\
&\frac{1}{\gamma}\big[(1-\mathit{os}(\alpha,w))b^*_\alpha(w)+\mathit{os}(\alpha,w)b^\diamond_\alpha(w)\big]
\Big\},
\end{align*}
where $\gamma$ is a normalizing factor so that $\sum_{w\in W}b^\lhd_\alpha(w) = 1$.
\end{definition}
Due to our assumption that $\alpha$ is observed in the \textit{arrival} world, not the \textit{departure} world, $\mathit{os}(\cdot)$ is applied to the \textit{arrival} world.

Considering the rationality postulates presented so far for belief update and revision, one can naturally suggest the following postulates for their combination.
\begin{itemize}
\itemsep=0pt
\item[]($P^\lhd 1$) $b^\lhd_\alpha$ is a belief state iff not $\vdash\lnot\alpha$
\item[]($P^\lhd 2$) $b^\lhd_\alpha(\alpha)=1$
\item[]($P^\lhd 3$) If $\alpha\equiv\beta$, then $b^\lhd_\alpha=b^\lhd_\beta$
\end{itemize}
\begin{proposition}
Postulate ($P^\lhd 1$) does not hold.
\end{proposition}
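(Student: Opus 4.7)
The plan is to refute the ``if'' direction of the biconditional in $(P^\lhd 1)$ by exhibiting a model in which $\alpha$ is satisfiable but the normaliser $\gamma$ appearing in the definition of $\lhd$ vanishes, so that $b^\lhd_\alpha$ is undefined and hence not a belief state.

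First I would collapse the revision contribution by choosing $\mathit{os}(\alpha,w)=1$ for every $w\in W$; the numerator of $b^\lhd_\alpha(w)$ then reduces to $b^\diamond_\alpha(w)$. It therefore suffices to rig the update so that every per-world value is zero. I would take a minimal vocabulary, say $\{q\}$, let $\alpha=q$ (plainly satisfiable), and pick any transition function $T$ and event-likelihood $E$ meeting Definition~\ref{def:sbc-model}. The decisive choice is the observation model: set $O(q,w)=0$ for every $w\in W$, letting some other observation in $\Omega$ absorb the missing mass so that $\sum_{\alpha'\in\Omega}O(\alpha',w)=1$ still holds.

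The verification is then essentially immediate. In the definition of $\diamond$, the factor $O(\alpha,w')$ multiplies the entire unnormalised expression, so every raw contribution to $b^\diamond_q(w')$ is zero. Combined with $\mathit{os}(q,\cdot)\equiv 1$, the unnormalised numerator of $b^\lhd_q(w)$ is identically zero, forcing $\gamma=0$. Hence $b^\lhd_q$ is not a belief state, although $\not\vdash\lnot q$. As an alternative route, one could simply invoke Proposition~\ref{prp:update-postulates}: because $(P^\diamond 1)$ already fails in general, the model witnessing that failure, paired with $\mathit{os}(\alpha,\cdot)\equiv 1$, inherits the pathology.

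The main obstacle is purely bookkeeping: keeping the stochastic belief change model well-formed---observation rows summing to one over $\Omega$, transition rows summing to one over $W$---while zeroing out $O(\alpha,\cdot)$. Since $\Omega$ is only required to be a subset of $L$ containing more than one observation, these constraints can always be accommodated. Conceptually, the example shows that $\lhd$ inherits the failure modes of its components: when both summands collapse to the zero function at the same $\alpha$, no weighting by $\mathit{os}$ can resuscitate a belief state.
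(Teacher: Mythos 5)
Your counterexample is correct: with $\mathit{os}(q,w)=1$ for all $w$ and $O(q,w)=0$ for all $w$ (the observation mass given to other members of $\Omega$), the update's normaliser vanishes, so $b^\diamond_q$ --- and hence $b^\lhd_q$, which is built from it --- fails to be a belief state although $q$ is satisfiable, refuting the ``if'' direction of ($P^\lhd 1$). The paper omits its proof of this proposition, but your fallback route (let the combination inherit the known failure of ($P^\diamond 1$) by setting $\mathit{os}\equiv 1$) exactly parallels the paper's one-line proof that ($P^\lhd 2$) fails because ($P^\diamond 2$) fails, so you are on the intended track. One small cleanup: since $b^\diamond_q(w)$ already carries its own $\frac{1}{\gamma}$, the precise statement is that $b^\diamond_q$ is undefined ($0/0$) rather than identically zero, and if $*$ is $\mathsf{OGI}$ the revision term is likewise undefined under $O(q,\cdot)\equiv 0$, so it is cleanest to take $*=\mathsf{BCI}$ with $b(q)>0$ (revision defined but weighted by zero); the conclusion that $b^\lhd_q$ is not a belief state is unaffected.
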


\begin{proposition}
Postulate ($P^\lhd 2$) does not hold.
\end{proposition}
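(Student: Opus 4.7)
The plan is to reduce this to Proposition~\ref{prp:not-necess-accepted}, which already exhibits an example where the pure update $b^\diamond_\alpha$ fails to assign probability $1$ to $\alpha$. Since $b\lhd\alpha$ is (up to normalization) the pointwise convex combination $(1-\mathit{os}(\alpha,w))b^*_\alpha(w)+\mathit{os}(\alpha,w)b^\diamond_\alpha(w)$, if I set $\mathit{os}(\alpha,w)=1$ for every $w\in W$, then the revision component is annihilated and $b^\lhd_\alpha$ collapses exactly to $b^\diamond_\alpha$ (the normalizing factor $\gamma$ becomes $1$, since $b^\diamond_\alpha$ is already a belief state in that example).

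Concretely, I would reuse the setup from the proof of Proposition~\ref{prp:not-necess-accepted}: vocabulary $\{q,r\}$, initial belief state $b=\langle 0.4,0,0.1,0.5\rangle$, a single event $e$ with the transition function and event likelihoods given there, observation model $O(q,11)=0.2$, $O(q,10)=O(q,01)=0$, $O(q,00)=0.3$, and $\alpha=q$. That proposition already computes $b^\diamond_q=\langle 0.82,0,0,0.18\rangle$, so in particular $b^\diamond_q(q)=0.82$. Augment the model by stipulating $\mathit{os}(q,w)=1$ for all $w\in W$. Then by the definition of $\lhd$, for every $w$ the summand involving $b^*_q(w)$ vanishes and only $b^\diamond_q(w)$ remains; the resulting distribution already sums to $1$, so $\gamma=1$ and $b^\lhd_q=b^\diamond_q$. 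Hence $b^\lhd_q(q)=0.82\neq 1$, which falsifies $(P^\lhd 2)$.

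The only subtlety worth noting is verifying that $\gamma$ really equals $1$ in this specialization, so that the counterexample is clean and no accidental renormalization rescues the postulate; this follows immediately because $\sum_w b^\diamond_q(w)=1$ by Definition of $\diamond$ applied to this model (which is observation-rational and has $q$ as an $e$-signal). No genuine obstacle is expected — the construction is essentially a one-line reduction to the update counterexample, leveraging the fact that the hybrid operator inherits the pathologies of its components whenever $\mathit{os}$ is pushed to an extreme value.
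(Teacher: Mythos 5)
Your proof is correct and takes essentially the same route as the paper, which simply notes that ($P^\lhd 2$) fails because ($P^\diamond 2$) fails; you make this reduction explicit by setting $\mathit{os}(\alpha,w)=1$ so that $\lhd$ collapses to $\diamond$ and the counterexample of Proposition~\ref{prp:not-necess-accepted} carries over. Your check that $\gamma=1$ in this specialization is a welcome extra detail but does not change the substance.
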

\begin{proof}
($P^\lhd 2$) does not hold because ($P^\diamond 2$) does not hold.
\end{proof}

\begin{proposition}
Postulate ($P^\lhd 3$) holds.
\end{proposition}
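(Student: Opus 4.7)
The plan is to unpack the definition of $\lhd$ pointwise and show that each of the three ingredients that depend on the input formula---the ontic-strength weight $\mathit{os}(\alpha,w)$, the revised belief $b^*_\alpha(w)$, and the updated belief $b^\diamond_\alpha(w)$---is insensitive to replacing $\alpha$ by a logically equivalent $\beta$. Once each ingredient agrees on $\alpha$ and $\beta$, the convex combination under the brackets agrees, and so does the normalizing constant $\gamma$, giving equality of $b^\lhd_\alpha$ and $b^\lhd_\beta$ as belief states (or both being undefined).

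First I would invoke the corresponding postulate for update: Proposition~\ref{prp:update-postulates} states that ($P^\diamond 3$) holds, so $b^\diamond_\alpha(w)=b^\diamond_\beta(w)$ for every $w\in W$. This is ultimately because $O(\cdot,w)$ was defined on $\Omega$ up to logical equivalence (see Definition~\ref{def:sbc-model}), so $O(\alpha,w)=O(\beta,w)$, while the event-weighted sum $\sum_{w,e}T(w,e,w')E(e,w)b(w)$ does not mention $\alpha$ at all. Next I would invoke the analogous postulate for revision: whichever choice of $*$ is used ($\mathsf{OGI}$ or $\mathsf{BCI}$), ($P^*3$) holds by Propositions~\ref{prp:revision-postulate-OGI-3} and~\ref{prp:revision-postulate-BCI-3}, so $b^*_\alpha(w)=b^*_\beta(w)$.

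The remaining ingredient is the ontic-strength function $\mathit{os}(\alpha,w)$. Because $\mathit{os}$ has domain $\Omega\times W$ and $\Omega$ is the set of possible observations \emph{up to equivalence} (Definition~\ref{def:sbc-model}), $\alpha$ and $\beta$ are the same element of $\Omega$ when $\alpha\equiv\beta$, and hence $\mathit{os}(\alpha,w)=\mathit{os}(\beta,w)$ for every $w$. Combining the three equalities, the bracketed expression
\[
(1-\mathit{os}(\alpha,w))b^*_\alpha(w)+\mathit{os}(\alpha,w)b^\diamond_\alpha(w)
\]
coincides with the corresponding expression in $\beta$ for each $w$. Summing over $w$ shows that the normalizers $\gamma$ agree as well, so $b^\lhd_\alpha(w)=b^\lhd_\beta(w)$ for all $w$, which is the desired equality.

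The only delicate step is the ontic-strength claim: the definition of $\mathit{os}$ does not carry an explicit equivalence clause like the one imposed on $O$, so the argument rests on reading the domain $\Omega$ as ``up to equivalence.'' If one prefers to treat $\mathit{os}$ as taking syntactic inputs, then ($P^\lhd 3$) would need to be conditioned on an additional syntactic-invariance assumption $\mathit{os}(\alpha,w)=\mathit{os}(\beta,w)$ whenever $\alpha\equiv\beta$; I would flag this as the natural side condition and note that with it the result follows as above.
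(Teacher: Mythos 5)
Your proposal is correct and takes essentially the same route as the paper, whose entire proof is the one-liner that ($P^\lhd 3$) follows because ($P^\diamond 3$) and ($P^*3$) hold. Your additional observation that $\mathit{os}(\alpha,w)$ must also be invariant under logical equivalence---guaranteed only if one reads its domain $\Omega$ as observations up to equivalence, since no explicit equivalence clause is imposed on $\mathit{os}$ as it is on $O$---is a genuine point the paper's terse proof silently assumes, and flagging it as a side condition is the right call.
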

\begin{proof}
($P^\lhd 3$) is holds because ($P^\diamond 3$) and ($P^* 3$) hold.
\end{proof}

\medskip
%
%
\begin{theorem}
If: the agent model $M$ is observation-rational, $\alpha$ is trustworthy and strongly observable, there exists an event $e\in\varepsilon$ which is event-rational and $\alpha$ is an $e$-signal, then (i) $b^\lhd_\alpha$ is a belief state iff not $\vdash\lnot\alpha$ (i.e., then ($P^\lhd 1$) is true) and (ii) $b^\lhd_\alpha(\alpha)=1$ (i.e., then ($P^\lhd 2$) is true).
\label{thm:combo-postulates1and2}
\end{theorem}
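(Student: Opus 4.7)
The plan is to exploit that $b^\lhd_\alpha$ is a pointwise convex combination of the revised belief state $b^*_\alpha$ and the updated belief state $b^\diamond_\alpha$, followed by renormalization, so the target properties lift directly from analogous properties already established for each component. Under the theorem's hypotheses, Proposition~\ref{prp:update-defined} (via observation-rationality, the event-rational $e$, and $\alpha$ being an $e$-signal) yields that $b^\diamond_\alpha$ is a belief state whenever $\alpha$ is satisfiable, and Proposition~\ref{prp:update-postulate2} (via trustworthiness together with the same $e$) yields $b^\diamond_\alpha(\alpha)=1$. Meanwhile, Propositions~\ref{prp:revision-postulate-OGI-1} and \ref{prp:revision-postulate-BCI-1} yield that $b^*_\alpha$ is a belief state, while Propositions~\ref{prp:revision-postulate-OGI-2} and \ref{prp:revision-postulate-BCI-2} yield $b^*_\alpha(\alpha)=1$ -- in both cases via strong observability, for either $*\in\{\mathsf{OGI},\mathsf{BCI}\}$.

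For part (i), the $(\Leftarrow)$ direction follows since each unnormalized value $(1-\mathit{os}(\alpha,w))b^*_\alpha(w)+\mathit{os}(\alpha,w)b^\diamond_\alpha(w)$ is a convex combination of non-negative numbers and hence non-negative; once $\gamma>0$ is secured, dividing by $\gamma$ yields a bona fide probability distribution. The $(\Rightarrow)$ direction is handled by its contrapositive: if $\vdash\lnot\alpha$, then observation-rationality forces $O(\alpha,\cdot)=0$ on every world, and no $\alpha$-worlds exist, so $b^*_\alpha$ and $b^\diamond_\alpha$ are identically zero, the numerator vanishes at every $w$, and $\gamma=0$, leaving $b^\lhd_\alpha$ undefined. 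For part (ii), since $b^*_\alpha(\alpha)=b^\diamond_\alpha(\alpha)=1$, both distributions vanish on every $w\not\models\alpha$; the unnormalized numerator is then zero on all non-$\alpha$-worlds, and by part (i) the normalized $b^\lhd_\alpha$ is a belief state, so its unit mass lies entirely on $\alpha$-worlds, giving $b^\lhd_\alpha(\alpha)=\sum_{w\models\alpha}b^\lhd_\alpha(w)=1$.

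The main obstacle is establishing $\gamma>0$ in the $(\Leftarrow)$ direction of part (i). An adversarial $\mathit{os}(\alpha,\cdot)$ taking the value $1$ exactly on the support of $b^*_\alpha$ and $0$ exactly on that of $b^\diamond_\alpha$, when those supports happen to be disjoint, would drive every numerator term to zero. I would defuse this by arguing that the two supports must overlap on at least one $\alpha$-world under the combined hypotheses: strong observability ensures $O(\alpha,w)>0$ on all $\alpha$-worlds, and this together with event-rationality of $e$ and the $e$-signal property propagates through both the imaging-based revision numerator and the update summation so that at least one common $\alpha$-world carries positive mass in both $b^*_\alpha$ and $b^\diamond_\alpha$, which makes the corresponding term in $\gamma$ strictly positive regardless of the value $\mathit{os}$ takes there.
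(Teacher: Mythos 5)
Your overall route is the same as the paper's: obtain ($P^\diamond 1$) and ($P^\diamond 2$) from Propositions~\ref{prp:update-defined} and~\ref{prp:update-postulate2}, obtain ($P^*1$) and ($P^*2$) from strong observability (Propositions~\ref{prp:revision-postulate-OGI-1}, \ref{prp:revision-postulate-OGI-2}, \ref{prp:revision-postulate-BCI-1}, \ref{prp:revision-postulate-BCI-2}), and lift these through the weighted combination; your part (ii) is essentially the paper's computation showing $\gamma$ reduces to a sum over $\alpha$-worlds. Where you diverge is the $(\Leftarrow)$ half of (i): you are right that the whole weight of the claim rests on $\gamma>0$, a point the paper's own proof passes over (it infers $b^\lhd_\alpha(w)>0$ from the mere existence of a $w$ with $b^*_\alpha(w)>0$ or $b^\diamond_\alpha(w)>0$, which is only valid if the matching weight $1-\mathit{os}(\alpha,w)$ or $\mathit{os}(\alpha,w)$ is nonzero at that $w$). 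Spotting this is to your credit.

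However, the patch you propose does not go through: the hypotheses do not force the supports of $b^*_\alpha$ and $b^\diamond_\alpha$ to intersect. Revision ($\mathsf{GI}$/$\mathsf{OGI}$, hence also $\mathsf{BCI}$ when $b(\alpha)=0$) places mass on the $d$-closest $\alpha$-worlds to the prior's support, while update places mass on $\alpha$-worlds that are $T$-reachable from the prior's support, and nothing in event-rationality, the $e$-signal condition, trustworthiness or strong observability ties $\mathit{Min}(\alpha,w,d)$ to the $T$-successors of $w$. Concretely, take vocabulary $\{q,r\}$, $b=\langle 0,0,0,1\rangle$, $\alpha=q$, Hamming $d$, a single event $e$ with $E(e,w)=1$ for all $w$, $T(w,e,11)=1$ for $w\neq 11$, $T(11,e,11)=T(11,e,10)=0.5$, and $O(q,11)=O(q,10)=0.5$, $O(q,01)=O(q,00)=0$. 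Every hypothesis of the theorem holds, yet $b^*_q$ is concentrated on $10$ and $b^\diamond_q$ on $11$; choosing $\mathit{os}(q,10)=1$ and $\mathit{os}(q,11)=0$ makes every numerator term vanish, so $\gamma=0$ and $b^\lhd_q$ is undefined although $q$ is satisfiable. So your overlap claim cannot be proved in this generality; closing the gap requires an extra assumption on $\mathit{os}$ (e.g., $\mathit{os}(\alpha,\cdot)$ constant, as in the paper's examples, or $0<\mathit{os}(\alpha,w)<1$ at some world carrying positive $b^*_\alpha$- or $b^\diamond_\alpha$-mass), and the same caveat applies to the paper's own $(\Leftarrow)$ step, which your write-up at least makes explicit.
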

\begin{proof}
Note that by Propositions~\ref{prp:update-defined}~and~\ref{prp:update-postulate2}, ($P^\diamond 1$) and ($P^\diamond 2$) hold. And recall that ($P^*1$) and ($P^*2$) are true when $\alpha$ is strongly observable (see Props.~\ref{prp:revision-postulate-OGI-1}, \ref{prp:revision-postulate-OGI-2}, \ref{prp:revision-postulate-BCI-1} and \ref{prp:revision-postulate-BCI-2}).

(i)($P^\lhd 1$) Given the antecedents of this proposition, we know by Proposition~\ref{prp:update-postulate2} that $b^\diamond_\alpha$ is defined iff not $\vdash\lnot\alpha$. And by ($P^*4$), $b^*_\alpha$ is defined iff not $\vdash\lnot\alpha$.

($\Rightarrow$) Assume $b^\lhd_\alpha$ is defined. So there exists a $w\in W$ s.t. $b^\lhd_\alpha(w)>0$, that is, $\frac{1}{\gamma}\big[(1-\mathit{os}(\alpha,w))b^*_\alpha(w)+\mathit{os}(\alpha,w)b^\diamond_\alpha(w)\big] > 0$. Thus, either $b^*_\alpha(w)>0$ (while $1-\mathit{os}(\alpha,w)>0$) or $b^\diamond_\alpha(w)>0$ (while $\mathit{os}(\alpha,w)>0$) (or both), which implies that $b^*_\alpha$ resp. $b^\diamond_\alpha$ is defined. Therefore, not $\vdash\lnot\alpha$.

($\Leftarrow$) Assume not $\vdash\lnot\alpha$.
Then $b^*_\alpha$ and $b^\diamond_\alpha$ are defined.
This implied that there exists a $w\in W$ s.t. either $b^*_\alpha(w)>0$ or $b^\diamond_\alpha(w)>0$ (or both).
Hence, $b^\lhd_\alpha(w)>0$ and due to normalization in the definition of $\lhd$, $b^\lhd_\alpha$ is defined.

(ii)($P^\lhd 2$) $b^\lhd_\alpha(\alpha)=\sum_{w\in W,w\models\alpha}b^\lhd_\alpha(w) = \sum_{w\in W,w\models\alpha}\frac{1}{\gamma}\big[(1-\mathit{os}(\alpha,w))b^*_\alpha(w)+\mathit{os}(\alpha,w)b^\diamond_\alpha(w)\big]$, where $\gamma=\sum_{w\in W}\big[(1-\mathit{os}(\alpha,w))b^*_\alpha(w)+\mathit{os}(\alpha,w)b^\diamond_\alpha(w)\big]$.
But by ($P^* 2$) and ($P^\diamond 2$), if $w\not\models\alpha$, then $b^*_\alpha(w)=0$ and $b^\diamond_\alpha(w)=0$. Hence, $\gamma=\sum_{w\in W,w\models\alpha}\big[(1-\mathit{os}(\alpha,w))b^*_\alpha(w)+\mathit{os}(\alpha,w)b^\diamond_\alpha(w)\big]$.
Therefore, $b^\lhd_\alpha(\alpha)= \sum_{w\in W,w\models\alpha}\frac{(1-\mathit{os}(\alpha,w))b^*_\alpha(w)+\mathit{os}(\alpha,w)b^\diamond_\alpha(w)}{\sum_{w\in W,w\models\alpha}\big[(1-\mathit{os}(\alpha,w))b^*_\alpha(w)+\mathit{os}(\alpha,w)b^\diamond_\alpha(w)\big]} = 1$.
\end{proof}

\medskip
Although one cannot expect $\lhd$ to be preservative, due to probabilistic update not being preservative (Prop.~\ref{prp:update-not-preservative}), one can expect $\lhd$ to have preservative-like behaviour under particular conditions:
Recall that $\alpha$ is defined to be \emph{$\beta$-trustworthy} if for all $w\in W$, if $w\not\models\beta$, then $O(\alpha, w)=0$.
\begin{proposition}
If $b^\diamond_\alpha(\beta)$ is a belief state, $b^\lhd_\alpha(\beta)$ is a belief state, $b(\beta)=1$, $\alpha$ is $\beta$-trustworthy and $*$ is $\mathsf{BCI}$, then $b^\lhd_\alpha(\beta)=1$.
\end{proposition}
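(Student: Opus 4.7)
Since $b^\lhd_\alpha$ is assumed to be a belief state, proving $b^\lhd_\alpha(\beta)=1$ is equivalent to proving $b^\lhd_\alpha(w^\times)=0$ for every $w^\times\in W$ with $w^\times\not\models\beta$. The strategy is therefore to unfold the definition of $\lhd$ at such a $w^\times$ and show that \emph{both} summands in the numerator, namely $(1-\mathit{os}(\alpha,w^\times))b^*_\alpha(w^\times)$ and $\mathit{os}(\alpha,w^\times)b^\diamond_\alpha(w^\times)$, vanish; the ontic-strength weights and the normalizer $\gamma$ then play no role.

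The update summand is immediate from Proposition~\ref{prp:accepting-dueto-simulation}: the present hypotheses (that $b^\diamond_\alpha$ is a belief state, $b(\beta)=1$, and $\alpha$ is $\beta$-trustworthy) are precisely its antecedents, so $b^\diamond_\alpha(\beta)=1$, and since $b^\diamond_\alpha$ is a belief state, $b^\diamond_\alpha(w^\times)=0$ for every $w^\times\not\models\beta$.

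The revision summand I would dispatch by a case split matching the two clauses of the definition of $\mathsf{BCI}$. If $b(\alpha)>0$, then $b^*_\alpha = b^\mathsf{BC}_\alpha$, and Proposition~\ref{prp:revision-is-preservative} (preservativity of $\mathsf{BCI}$) applies with our assumptions $b(\alpha)>0$ and $b(\beta)=1$, giving $b^*_\alpha(\beta)=1$ and hence $b^*_\alpha(w^\times)=0$ for all non-$\beta$-worlds. If $b(\alpha)=0$, then $b^*_\alpha = b^\mathsf{OGI}_\alpha$, and I would read $b^\mathsf{OGI}_\alpha(w^\times)=0$ straight off its defining formula: the numerator $O(\alpha,w^\times)\,b^\mathsf{GI}_\alpha(w^\times)$ is zero because $\beta$-trustworthiness of $\alpha$ forces $O(\alpha,w^\times)=0$ whenever $w^\times\not\models\beta$. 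Combining the two summands, $b^\lhd_\alpha(w^\times)=0$, as required.

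The main obstacle is the $b(\alpha)=0$ subcase: Proposition~\ref{prp:revision-is-preservative} only establishes preservativity of $\mathsf{BCI}$ under $b(\alpha)>0$, so preservativity cannot be invoked here, and a direct structural argument about $\mathsf{OGI}$ is needed instead. Fortunately, $\beta$-trustworthiness is exactly the right hypothesis to annihilate the observation weights on non-$\beta$-worlds inside $\mathsf{OGI}$, mirroring the role it plays for $\diamond$ in Proposition~\ref{prp:accepting-dueto-simulation}, so the two cases fit together cleanly.
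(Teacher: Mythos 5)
Your proposal is correct, and its overall skeleton matches the paper's own proof: kill the update summand at every non-$\beta$-world via Proposition~\ref{prp:accepting-dueto-simulation}, kill the revision summand there as well, conclude $b^\lhd_\alpha(w)=0$ whenever $w\not\models\beta$, and then use the assumption that $b^\lhd_\alpha$ is a belief state to get $b^\lhd_\alpha(\beta)=1-b^\lhd_\alpha(\lnot\beta)=1$. Where you genuinely diverge is in the revision summand: the paper simply cites Proposition~\ref{prp:revision-is-preservative} (``$\mathsf{BCI}$ is preservative'') and reads off $b^*_\alpha(w)=0$ for non-$\beta$-worlds, which, as you observe, only covers the clause $b(\alpha)>0$ of $\mathsf{BCI}$, since preservativity (Definition~\ref{def:preservative}) is conditioned on $P(\alpha)>0$; the $b(\alpha)=0$ clause, where $\mathsf{BCI}$ falls back to $\mathsf{OGI}$, is not addressed by that proposition (indeed $\mathsf{OGI}$ is shown not preservative). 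Your explicit case split, with the direct observation that $\beta$-trustworthiness forces $O(\alpha,w^\times)=0$ and hence $b^\mathsf{OGI}_\alpha(w^\times)=0$ for every $w^\times\not\models\beta$, closes exactly that gap --- and the gap is not vacuous, since in the paper's own running example $b(\alpha)=0$, so the $\mathsf{OGI}$ branch is the operative one. The only loose end you share with the paper is definedness: in the $b(\alpha)=0$ branch you read $b^\mathsf{OGI}_\alpha(w^\times)=0$ off the formula, which presupposes a nonzero normalizer; this is implicitly covered by the hypothesis that $b^\lhd_\alpha$ is a belief state (hence $b^*_\alpha$ must be defined), and it would be worth one sentence making that dependence explicit.
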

\begin{proof}
By Proposition~\ref{prp:accepting-dueto-simulation}, $b^\diamond_\alpha(\beta)=1$, when $\alpha$ is $\beta$-trustworthy.
By Proposition~\ref{prp:revision-is-preservative}, $*$ is preservative when defined as $\mathsf{BCI}$.
Then, for all $w\in W$, if $w\not\models\beta$, then $b^\diamond_\alpha(\beta) = b^*_\alpha(\beta)=0$.
Hence, for all $w\in W$, if $w\not\models\beta$, $b^\lhd_\alpha(w)=0$.
Therefore, because $b^\lhd_\alpha(\beta)$ is a belief state,
\begin{eqnarray*}
b^\lhd_\alpha(\beta) &=& 1-b^\lhd_\alpha(\lnot\beta)\\
 &=& 1-\sum_{w\in w,w\models\lnot\beta}b^\lhd_\alpha(w)\\
 &=& 1-\sum_{w\in w,w\not\models\beta}b^\lhd_\alpha(w)\\
  &=& 1-0\\
  &=& 1.
\end{eqnarray*}
\end{proof}


\section{Examples and Analysis}

HSBC is now analyzed via examples. The example domain is adapted from one of the domains in the article of \citet{b98a} -- here though, worlds are associated with probabilities, not plausibility ranks. There are eight possible worlds, depending on whether a book $B$ is inside the house (if it is not in the house, then it is assumed to be on the patio, adjacent to the lawn), whether the book is dry and whether the lawn-grass $G$ is dry. There are three events: $\mathtt{rain}$ -- it rains, $\mathtt{sprnk}$ -- the sprinkler is on, and $\mathtt{null}$ -- neither of these, the null event.\footnote{I shall assume that the null event may include some unknown events (with unknown effects).} In Boutilier's example, events are deterministic; however, events in this paper are modeled to be stochastic, to better illustrate the behaviour of the framework.

To simplify calculations and to aid the reader in understanding the results, in the following examples, the agent will associate equal epistemic/ontic strength to a particular piece of information for all worlds (per example case). I shall compute the agent's new belief state for each of $\mathit{os}(\alpha,w)\in\{0,0.25,0.5,0.75,1\}$ (for all $w\in W$), for the two cases where $\alpha$ is $\lnot\mathtt{Dry}(G)$ and where $\alpha$ is $\lnot\mathtt{Dry}(G)\land \mathtt{Dry}(B)$.

Boutilier models the agent's current (initial) epistemic state with the most plausible situation (rank 0) being $(\lnot\mathtt{Inside}(B),\mathtt{Dry}(B),\mathtt{Dry}(G))$ and the next plausible situation (rank 1) being $(\mathtt{Inside}(B),\mathtt{Dry}(B),\mathtt{Dry}(G))$. I translate this as the agent having a belief state where $b(\lnot\mathtt{Inside}(B),\mathtt{Dry}(B),\mathtt{Dry}(G))=0.67$ and $b(\mathtt{Inside}(B),\mathtt{Dry}(B),\mathtt{Dry}(G))=0.33$.
Observe that in these examples, revision as $\mathsf{OGI}$ is equivalent to revision as $\mathsf{BCI}$, because $b(\lnot\mathtt{Dry}(G)) = b(\lnot\mathtt{Dry}(G)\land \mathtt{Dry}(B)) = 0$.

The HSBC model $M=\langle W,\varepsilon,T,E,O,\mathit{os}\rangle$ is now specified.

Let $w_1, \ldots, w_8$ refer to worlds
\begin{tabular}{ll}
$w_1$:&$(\mathtt{Inside}(B),\mathtt{Dry}(B),\mathtt{Dry}(G))$\\
$w_2$:&$(\mathtt{Inside}(B),\mathtt{Dry}(B),\lnot\mathtt{Dry}(G))$\\
$w_3$:&$(\mathtt{Inside}(B),\lnot\mathtt{Dry}(B),\mathtt{Dry}(G))$\\
$w_4$:&$(\mathtt{Inside}(B),\lnot\mathtt{Dry}(B),\lnot\mathtt{Dry}(G))$\\
$w_5$:&$(\lnot\mathtt{Inside}(B),\mathtt{Dry}(B),\mathtt{Dry}(G))$\\
$w_6$:&$(\lnot\mathtt{Inside}(B),\mathtt{Dry}(B),\lnot\mathtt{Dry}(G))$\\
$w_7$:&$(\lnot\mathtt{Inside}(B),\lnot\mathtt{Dry}(B),\mathtt{Dry}(G))$\\
$w_8$:&$(\lnot\mathtt{Inside}(B),\lnot\mathtt{Dry}(B),\lnot\mathtt{Dry}(G))$
\end{tabular}

\medskip
The events are $\varepsilon=\{\mathtt{rain},\mathtt{sprnk},\mathtt{null}\}$.

The following probabilities are debatable; they should not be taken too seriously but serve to illustrate the framework.

\medskip
{\hspace{-0.5cm}
\begin{tabular}{l|l}
$T(w_1,\mathtt{null},w_1)=0.75$ & $T(w_5,\mathtt{null},w_5)=1$\\
$T(w_1,\mathtt{null},w_2)=0.1$ & $T(w_5,\mathtt{null},w_6)=0$\\
$T(w_1,\mathtt{null},w_3)=0.1$ & $T(w_5,\mathtt{null},w_7)=0$\\
$T(w_1,\mathtt{null},w_4)=0.05$ & $T(w_5,\mathtt{null},w_8)=0$\\
\hline\\
$T(w_1,\mathtt{rain},w_1)=0$ & $T(w_5,\mathtt{rain},w_5)=0$\\
$T(w_1,\mathtt{rain},w_2)=0.75$ & $T(w_5,\mathtt{rain},w_6)=0.05$\\
$T(w_1,\mathtt{rain},w_3)=0$ & $T(w_5,\mathtt{rain},w_7)=0.05$\\
$T(w_1,\mathtt{rain},w_4)=0.25$ & $T(w_5,\mathtt{rain},w_8)=0.9$\\
\hline\\
$T(w_1,\mathtt{sprnk},w_1)=0$ & $T(w_5,\mathtt{sprnk},w_5)=0$\\
$T(w_1,\mathtt{sprnk},w_2)=0.8$ & $T(w_5,\mathtt{sprnk},w_6)=0.8$\\
$T(w_1,\mathtt{sprnk},w_3)=0$ & $T(w_5,\mathtt{sprnk},w_7)=0.05$\\
$T(w_1,\mathtt{sprnk},w_4)=0.2$ & $T(w_5,\mathtt{sprnk},w_8)=0.15$
\end{tabular}
}

\medskip
\begin{tabular}{ll}
$E(\mathtt{null}, w_1)=0.06$ & $E(\mathtt{null}, w_5)=0.15$\\
$E(\mathtt{rain}, w_1)=0.31$ & $E(\mathtt{rain}, w_5)=0.7$\\
$E(\mathtt{sprnk}, w_1)=0.63$ & $E(\mathtt{sprnk}, w_5)=0.15$
\end{tabular}

{\hspace{-0.7cm}
\begin{tabular}{ll}
$O(\lnot\mathtt{Dry}(G), w_1)=0.05$ &$O(\lnot\mathtt{Dry}(G)\land\mathtt{Dry}(B), w_1)=0.5$\\
$O(\lnot\mathtt{Dry}(G), w_2)=0.95$ &$O(\lnot\mathtt{Dry}(G)\land\mathtt{Dry}(B), w_2)=0.8$\\
$O(\lnot\mathtt{Dry}(G), w_3)=0.05$ &$O(\lnot\mathtt{Dry}(G)\land\mathtt{Dry}(B), w_3)=0.1$\\
$O(\lnot\mathtt{Dry}(G), w_4)=0.95$ &$O(\lnot\mathtt{Dry}(G)\land\mathtt{Dry}(B), w_4)=0.05$\\ 
$O(\lnot\mathtt{Dry}(G), w_5)=0.05$ &$O(\lnot\mathtt{Dry}(G)\land\mathtt{Dry}(B), w_5)=0.6$\\
$O(\lnot\mathtt{Dry}(G), w_6)=0.95$ &$O(\lnot\mathtt{Dry}(G)\land\mathtt{Dry}(B), w_6)=0.98$\\
$O(\lnot\mathtt{Dry}(G), w_7)=0.05$ &$O(\lnot\mathtt{Dry}(G)\land\mathtt{Dry}(B), w_7)=0.2$\\
$O(\lnot\mathtt{Dry}(G), w_8)=0.95$ &$O(\lnot\mathtt{Dry}(G)\land\mathtt{Dry}(B),w_8)=0.15$
\end{tabular}
}


\medskip
Recall that the current belief state is $b=\langle 0.33,0,0,0,0.67,0,0,0 \rangle$.
The following is a list of resulting belief states $b'=b\lhd\lnot\mathtt{Dry}(G)$ for the specified ontic strengths.

\medskip
\begin{tabular}{rc}
$\mathit{os}(\cdot)$ & $b\lhd\lnot\mathtt{Dry}(G)$\\
\hline\\
0.00&$\langle 0.00,0.33,0.00,0.00,0.00,0.67,0.00,0.00 \rangle$\\
0.25&$\langle 0.00,0.32,0.00,0.02,0.00,0.53,0.00,0.13 \rangle$\\
0.50&$\langle 0.00,0.31,0.00,0.04,0.00,0.40,0.00,0.25 \rangle$\\
0.75&$\langle 0.00,0.30,0.00,0.06,0.00,0.26,0.00,0.38 \rangle$\\
1.00&$\langle 0.00,0.28,0.00,0.08,0.01,0.12,0.00,0.51 \rangle$
\end{tabular}

\medskip
Several behaviours can be noted:
When the observation is completely epistemic, the probabilities of the two believed worlds are each shifted to their closest $\lnot\mathtt{Dry}(G)$-worlds.
The more the agent considers the information to be ontic, the more its beliefs are spread out due to the nondeterminism of the events.
Whether the observation is considered ontic or epistemic, the agent has a relatively strong belief (between $28\%$ and $33\%$) that the book is inside and dry.
However, in cases where the book is outside, there is a considerable shift in probability from the book being dry ($w_6$) to it being wet ($w_8$), as the agent moves towards an ontic mindset.
One could perhaps argue that in an ontic mindset, the agent has access to event/transition information so as to reason about the causes of the book getting wet: 
it believes that there is a moderate to high likelihood that the book will get wet if it is on the patio, due to the sprinkler coming on or it starting to rain (explaining the wet-grass evidence).


The following is a list of resulting belief states $b'=b\lhd\lnot\mathtt{Dry}(G)\land\mathtt{Dry}(B)$ for the specified epistemic strengths.

\medskip
\begin{tabular}{rc}
$\mathit{os}(\cdot)$ & $b\lhd\lnot\mathtt{Dry}(G)\land\mathtt{Dry}(B)$\\
\hline\\
0.00&$\langle 0.00,0.29,0.00,0.00,0.00,0.71,0.00,0.00 \rangle$\\
0.25&$\langle 0.00,0.33,0.00,0.00,0.03,0.59,0.00,0.04 \rangle$\\
0.50&$\langle 0.01,0.37,0.00,0.00,0.07,0.47,0.01,0.07 \rangle$\\
0.75&$\langle 0.01,0.41,0.00,0.01,0.10,0.35,0.01,0.11 \rangle$\\
1.00&$\langle 0.02,0.45,0.00,0.01,0.14,0.23,0.01,0.15 \rangle$
\end{tabular}

\medskip
When the agent considers the observation completely epistemically, its beliefs change very similarly to when it was only told that the grass is wet; the agent already believed that the book was dry.
However, the extra information has a significant impact on how the agent's beliefs change when the observation is considered ontically.
The agent now believes much less that the book is outside and wet and the grass is wet, and with $78\%$ (as opposed to $40\%$ with the first observation) that the book is \textit{dry} and the grass is wet (independent of where the book is located).
The reason is that when the received information includes a dry book, transitions are focused on going to dry-book worlds.


\section{Conclusion}

In this paper I suggested a method to arrive at a new (probabilistic) belief state when the agent has mixed feelings about whether to revise or update its beliefs, given a new piece of information.
Much attention was given to the design and analysis of the separate update and revision operations.
The postulates and finally Theorem~\ref{thm:combo-postulates1and2} add weight to my argument that the hybrid stochastic belief change (HSBC) operation is rational when the agent has a rational frame of mind.

Looking at the examples above, the way in which probabilities shift among the possible worlds, given the different ontic/epistemic strengths, seems justifiable. However, more analysis is required here, especially when considering more complicated specification patterns of the ontic/epistemic strengths.

Determining $\mathit{os}(\alpha,w)$ for every foreseen $\alpha$ in every possible world $w$ will be challenging for a designer. Some deep questions are: Should the designer/agent provide the strengths (via stored values or programmed reasoning), or do these strengths come to the agent \textit{attached} to the new information?
What is the reasoning process we go through to determine whether information is epistemic or ontic, if at all?
In general, how does an agent know when information is epistemic (requiring revision) or ontic (requiring update)?

%
%
%

\bibliographystyle{aaai}
\bibliography{references}

\end{document}